\documentclass{article}
\usepackage[margin=1in]{geometry} 
\usepackage{amssymb,amsmath,amsthm} 
\usepackage{xcolor} 
\usepackage{verbatim} 
\usepackage[english]{babel} 
\usepackage{blindtext} 
\usepackage[T1]{fontenc} 
\usepackage{underscore} 
\usepackage{enumitem} 
\usepackage{hyperref} 
\usepackage{cleveref} 
\usepackage{booktabs} 
\usepackage{multirow} 
\usepackage{mwe,tikz} 
\usepackage[percent]{overpic} 
\usepackage{kbordermatrix} 
\usepackage{listings} 
\usepackage{xcolor} 
\lstdefinestyle{MATLAB}{ language=Matlab, basicstyle=\ttfamily\scriptsize, stringstyle=\color{magenta}, commentstyle=\color{green!50!black}, numbers=none, frame=single, tabsize=4, breaklines=true, breakatwhitespace=true } 
\makeatletter 
\usetikzlibrary{positioning} 
\usetikzlibrary{automata} 
\usepackage{mathtools} 
\usepackage{graphicx} 
\usepackage{subfigure}
\usepackage{subcaption} 
\usepackage{indentfirst} 
\usepackage{algorithm} 
\usepackage{algorithmic} 
\floatname{algorithm}{Algorithm}

\usepackage[round]{natbib}

\newtheorem{definition}{Definition}[section]

\newtheorem{theorem}{Theorem}[section] 
\newtheorem{lemma}{Lemma}[section]

\newtheorem{example}{Example}[section] 
\numberwithin{equation}{section} 
 
\newcommand{\norm}[1]{\left\lVert #1 \right\rVert}

\newcommand{\RR}{\mathbb{R}} 
\newcommand{\CC}{\mathbb{C}}

\newcommand{\OO}{\mathbb{O}}

\newcommand{\Tr}{{\rm Tr}} 
\newcommand{\loss}{{\rm loss}}

\begin{document} 
\title{Hidden Convexity of Fair PCA and Fast Solver \\ via Eigenvalue Optimization} 
\author{
  Junhui Shen
  %\thanks{Department of Mathematics, University of California, Davis, \texttt{junshen@ucdavis.edu}} 
  %\and
  \quad 
  Aaron J. Davis
  %\thanks{†Department of Mathematics, University of Kentucky, \texttt{Aaron.Davis1@uky.edu}} 
  %\and
  \quad
  Ding Lu
  %\thanks{†Department of Mathematics, University of Kentucky, \texttt{Ding.Lu@uky.edu}} 
  %\and
  \quad
  Zhaojun Bai
  %\thanks{Department of Computer Science and Department of Mathematics, University of California, Davis, \texttt{zbai@ucdavis.edu}}
}
\date{}
\maketitle 
\noindent

\begin{abstract}
    Principal Component Analysis (PCA) is a foundational technique in machine learning for dimensionality reduction of high-dimensional datasets. However, PCA could lead to biased outcomes that disadvantage certain subgroups of the underlying datasets.
    To address the bias issue, a Fair PCA (FPCA) model was introduced by Samadi et al. (2018) 
    for equalizing the reconstruction loss between subgroups. 
    The semidefinite relaxation (SDR) based approach proposed by 
    % \cite{Samadi:2018} 
    Samadi et al. (2018)
    is computationally expensive even for suboptimal solutions. 
    % -------------------------------- OLD 
    % \cite{pelegrina:2024} 
    %To improve computational efficiency, Pelegrina and Duarte (2023) introduced an alternative model that minimizes a weighted combination of overall reconstruction error and subgroup disparity,
    %using a golden-section search to find the optimal weighting factor. 
    %This approach, however, shifts the focus away from directly minimizing reconstruction loss, as in Samadi et al. (2018).
    % \cite{Samadi:2018}.
    % -------------------------------- NEW
    %To tackle the computational challenges of semidefinite relaxation  and improve efficiency, 
    To improve efficiency,
    %and avoid the need for semidefinite relaxation,
    several alternative variants of the FPCA model have been developed. 
    These variants often shift the focus away from equalizing the reconstruction loss.
    %as originally intended in Samadi et al. (2018).
    % --------------------------------
    In this paper, we identify a hidden convexity in the FPCA model 
    and introduce an algorithm for convex optimization via eigenvalue optimization. 
    Our approach achieves the desired fairness in reconstruction loss without sacrificing performance. 
    As demonstrated in 
    real-world datasets, the proposed FPCA algorithm runs $8\times$ faster than the SDR-based algorithm, and 
    only at most 85\% slower than the standard PCA. 
\end{abstract}

\section{Introduction} \label{sec:intro}

%\paragraph{Fairness Issues in Machine Learning.}
\paragraph{Fairness in Machine Learning.}
%Machine learning has revolutionized decision-making, but concerns about fairness persist due to bias. Bias can stem from biased data, such as non-representative samples or measurement errors \citep{Caton:2024, Mehrabi:2021}, and from biased algorithms that prioritize overall accuracy without accounting for fairness \citep{Chouldechova:2018, Hardt:2016}. Tackling these biases is essential to ensure equitable outcomes across all subgroups.
Machine learning has revolutionized decision-making, 
but concerns about fairness persist due to various levels of bias
throughout the process.
Bias can arise from skewed data,  
such as non-representative samples or measurement errors \citep{Caton:2024, Mehrabi:2021}, 
as well as from algorithms that prioritize overall accuracy at the expense of fairness \citep{Chouldechova:2018, Hardt:2016}.
Addressing these biases is essential to prevent discriminatory outcomes
and ensure the integrity and reliability of the decision-making procedure.

\paragraph{PCA and Fair PCA.}
PCA is arguably the most prominent linear dimensionality reduction technique
in machine learning and data science~\citep{Hotelling:1933, Pearson:1901}. 
However, the standard PCA ignores disparities between subgroups
and inadvertently leads to biased or discriminatory representations,
which can have particularly harmful consequences in socially impactful applications. 
This challenge has prompted the development of various fair PCA models to address biases. 
Approaches to fair PCA generally fall into two categories:
(a)~equalizing distributions of dimensionality-reduced data and 
(b)~equalizing approximation errors introduced by dimensionality reduction across subgroups.

Approaches in the first category focus mostly on mitigating statistical inference of sensitive attributes in the projection.
For example, 
\cite{Olfat:2019} aims to reduce disparities
in group means and covariance of projected data %related to sensitive features,
using semidefinite programming.
\cite{Lee:2022} reduces the maximum mean discrepancy
between distributions of protected groups through an exact penalty method.
\cite{Kleindessener:2023} seeks to achieve statistical independence between the 
projected data and its sensitive attributes, 
by mapping the data onto the null space of a vector involving those attributes.
Meanwhile, \cite{Lee:2023} introduces a ``null it out'' approach,
which nullifies the directions in which the sensitive attribute can be inferred,
using unfair directions including the mean difference and eigenvectors of the second 
moment difference, via a noisy power method.

In  the second category of approaches,
\cite{Samadi:2018} introduces a Fair PCA (FPCA) model aimed at equalizing {\em reconstruction loss}
across subgroups by minimizing the maximal reconstruction losses.
The proposed algorithm relies on semidefinite relaxation
and involves semidefinite programming followed by linear programming. It 
is computationally expensive and can operate ``{\em10 to 15 times}'' slower
than the standard PCA  \cite{Samadi:2018}.
Moreover, the semidefinite relaxation may introduce extra dimensions in the projection 
subspace in order to meet the fairness constraints. An upper bound on the extra dimensions is provided by~\cite{Tantipongpipat:2019}.
To address computational challenges arising in semidefinite relaxation,
\cite{Kamani:2022} introduces  a new framework
to minimize both overall reconstruction error and group fairness
%- measured by reconstruction losses --  
in the Pareto-optimal sense by adaptive gradient descent.
In the same context of multi-objective optimization,
using the disparity between reconstruction errors as a fairness measure,
\cite{Pelegrina:2022} introduces a so-called strength Pareto evolutionary algorithm.
Subsequently,  
\cite{pelegrina:2024} proposes to cast the multi-objective problem as a single-objective 
optimization by optimally weighting the  objective functions, 
which is then solved by eigenvalue decompositions.
More recently, \cite{Xu:2024FPCA} introduces an Alternating Riemannian/Projected Gradient Descent Ascent (ARPGDA) algorithm for the general fair PCA problem.  
Building on this, \cite{Xu:2024} proposed a Riemannian Alternating Descent Ascent (RADA) framework for nonconvex-linear minimax problems on Riemannian manifolds. 

% Here, it is important to note that all those approaches diverge from the original focus
% of \cite{Samadi:2018}, which aimed to equalize reconstruction loss between subgroups.

\paragraph{Contributions.}
The primary goal of this work is to revisit the
FPCA approach proposed by~\cite{Samadi:2018} 
and to present a novel algorithm that directly computes the FPCA
without relying on semidefinite relaxation and linear programming. 
Our contributions are threefold: 

\begin{itemize}
\setlength{\itemsep}{0pt}
\item 
We uncover a hidden convexity of the FPCA model by reformulating it as a minimization of a convex function 
over the joint numerical range of two matrices. 
%This reformulation not only allows us to develop an efficient solver but also provides a geometric interpretation of the fair PCA model.
This reformulation facilitates the development of an efficient solver and provides a geometric interpretation of the FPCA model.

\item 
We develop an efficient and reliable algorithm to solve the 
resulting  %uncovered 
convex optimization problem 
using univariate eigenvalue optimization.  
This approach directly yields %The univariate eigenvalue optimization  directly leads to 
the orthogonal basis $U$
for the projection subspace of the FPCA. 

\item 
We validate our method through extensive experiments on human-centric datasets, 
demonstrating that the new algorithm produces numerically accurate solutions, and meanwhile, gains up to $8\times$ 
speedup over the FPCA algorithm via semidefinite relaxation. 
Compared to standard PCA (without fairness constraints), 
the new algorithm is only at most an $85.81\%$ slowdown.
%Furthermore, our method removes the necessity for manual weight adjustments required in \cite{pelegrina:2024}.
\end{itemize}

\paragraph{Organization.}
The rest of this paper is organized as follows:
\Cref{sec:pca} reviews the fundamentals of the standard PCA and its fairness issue.
\Cref{sect:fairpca} defines the fair PCA problem and 
uncovers a hidden convexity in it through the joint numerical range.
%reformulates it  as a min-max problem of two trace functions,  revealing its hidden convexity.
\Cref{sec:algorithm} details the eigenvalue optimization and implementation of our algorithm. 
\Cref{sec:experiment} presents numerical experiments comparing our algorithm with the original
fair PCA method by \cite{Samadi:2018} and the standard PCA. 
Finally, \Cref{sec:conclusion} provides concluding remarks.

\paragraph{Notations.}
We use standard notations in matrix analysis,
including  $\Tr(\cdot)$ for the trace of a matrix,
$\sigma_i(\cdot)$ for the $i$-th largest singular value, 
and  $\lambda_i(\cdot)$ for the $i$-th smallest eigenvalue.
The set of  orthogonal matrices is denoted by
\begin{equation}\label{eq:oo}
\mathbb O^{n \times r} := \left\{ U \in \mathbb{R}^{n \times r} : U^{T} U = I_r \right\}.
\end{equation}

\section{PCA and fairness issue} \label{sec:pca}

\paragraph{Principal Component Analysis.}
PCA is a fundamental technique
for dimensionality reduction. 
Let  $M \in \RR^{m \times n}$ be a data matrix,
where each row represents a sample with $n$ features, and assume that $M$ is centered, that is ${\bf 1}^TM=0$. 
The goal of PCA is to find  a projection basis 
$U \in \mathbb O^{n \times r}$ that reduces the feature dimension $n$ and best captures the variance in the data $M$. This optimal projection is found by 
minimizing the {\em reconstruction error}
\begin{equation} \label{eq:pca}
    \min_{U \in \mathbb O^{n \times r}} \norm{M -M UU^{T}}_{F}^{2}.
\end{equation}
It is well known \citep[pp.534-541]{Hastie:2009}  that the optimization problem~\eqref{eq:pca} 
is equivalent to the trace maximization 
\begin{equation}
\max_{U\in\mathbb O^{n \times r}} \Tr\left(U^TM^TMU\right),
\end{equation}
where the optimal projection basis is given by 
$U_{\!_M} \in\mathbb O^{n \times r}$ consisting of the orthogonal 
eigenvectors of the largest $r$ eigenvalues of the matrix $M^TM$.

\paragraph{Fairness issue.}
%In real-world datasets, there are often distinct subsets of data with unique characteristics such as gender, race, or other attributes.
Real-world datasets often contain distinct subsets of data with unique characteristics such as gender, race, or other attributes.
Similar to \cite{Samadi:2018},
in this paper, we will focus on the case where the dataset consists of
two subgroups, $A$ and $B$,
namely, the entire dataset contains $m=m_1+m_2$ data points,
with $m_1$ in the subgroup $A$ and $m_2$ in subgroup B.
The data matrix $ M \in \mathbb{R}^{m \times n}$ can then be written as
\begin{equation}\label{eq:mmat}
M = \begin{bmatrix} A\\ B \end{bmatrix},
\end{equation}
where $A \in \mathbb{R}^{m_{1} \times n}$  and $B \in \mathbb{R}^{m_{2} \times n}$.
%where $A \in \mathbb{R}^{m_{1} \times n}$  and $B \in \mathbb{R}^{m_{2} \times n}$
%contain data points for each subgroup,
%with each row representing a sample with $n$ features.
Fairness issues arise when the standard PCA is applied for analyzing the whole dataset $M$ to capture the maximum variance of $M$, as it may overlook disparities between subgroups and lead to an unfair solution where one subgroup is disproportionately affected.
See \Cref{fig:fairplot2} for an illustrative example.
Alternatively, applying the standard PCA separately to each subgroup
with two projection matrices
would neglect cross-group information and raise ethical concerns. %\citep{Samadi:2018}. 
%Fair PCA aims to achieve an optimal balance between the two approaches above:
Therefore, it is necessary  to strike a balance 
by finding a single projection subspace for the entire dataset while accounting for  
disparities between subgroups.

\begin{figure}[]
    \centering
    \includegraphics[width=0.8\linewidth]{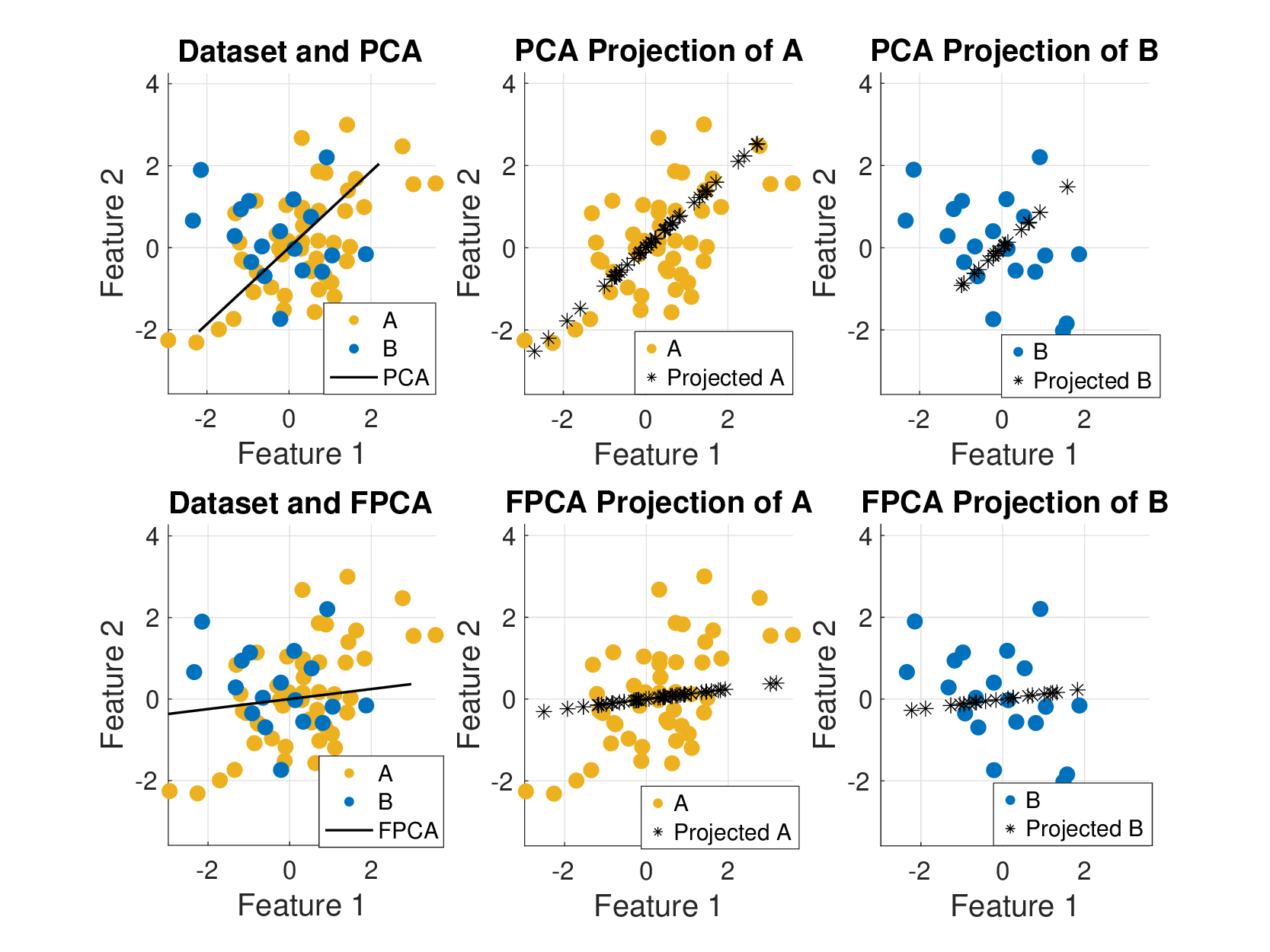}
    \caption{
    %PCA for a dataset in $\RR^2$.
    \underline{Top panel:} 
    the leading principal components of $M$ by the standard PCA, which captures the maximum variance of $A$, 
    at the expense of $B$, leading to an unfair projection.
    \underline{Bottom panel:}
    The FPCA  effectively reduces the imbalance in the variances captured. %for $A$ and $B$ in projection.
    }  
    \label{fig:fairplot2}
\end{figure}

%\subsection{Definition of Fair PCA}
\section{Fair PCA and hidden convexity} \label{sect:fairpca}
In this section, we revisit the fair PCA model 
and reveal a hidden convexity of the model. 
%The results will be presented in terms of the 
%projection basis matrix $U\in\mathbb O^{n \times r}$,
%rather than of the orthogonal projection $P=UU^T$ as used in~\cite{Samadi:2018}.

\subsection{Fair PCA model}

The major goal of fair PCA by \cite{Samadi:2018} is to 
find an optimal  projection scheme that achieves
equity in the (average) {\em reconstruction loss} between individual subgroups. 

\begin{definition} \label{def:loss}
    The (average) {\em reconstruction loss} for a given data matrix $D\in\RR^{p\times n}$
    by a  projection basis matrix $U\in\mathbb O^{n \times r}$ is defined as
    \begin{equation}\label{eq:loss} 
        \loss_{\!_D}(U)\! =\! \frac{1}{p}\Big(\|D \! - \! DUU^T\|_F^2 \!-\! \| D \! - \! D U_{\!_D} U_{\!_D}^{T} \|_F^2\Big),
    \end{equation}
    where $ {U}_{\!_D} \in \mathbb O^{n \times r}$ denotes the %(optimal) 
    PCA solution for $D$,
    that is $U_{\!_D}$ consists of orthogonal eigenvectors corresponding to the largest $r$ eigenvalues of $D^TD$.
\end{definition}
Recall that the %optimal 
solution $U_{\!_D}$ by PCA provides the optimal projection basis for the matrix $D$,
achieving the minimal reconstruction error.
Hence the $\loss_{\!_D}(U)$ measures how much worse a given projection basis $U$ is compared to 
the optimal solution $U_{\!_D}$.

\paragraph{Fair PCA model.}
By~\Cref{def:loss},
a projection by the basis matrix $U\in\mathbb O^{n \times r}$ is a fair projection
%for dimensionality reduction of subgroups $A$ and $B$
for two subgroups $A \in \RR^{m_{1} \times n}$ and $B \in \RR^{m_{2} \times n}$
if it ensures equity in their reconstruction losses that is
\begin{equation} \label{eq:fairobj}
 \loss_{{\!_A}}(U) = \loss_{{\!_B}}(U).
\end{equation}
In other words, the linear dimensionality reduction should represent the two subgroups $A$ and $B$ with equal fidelity. 
To achieve the fairness \eqref{eq:fairobj}, 
%and to facilitate computation and analysis, 
the following Fair PCA (henceforth FPCA) model is introduced
%, which is essentially identical  
%to the definition 
by~\cite{Samadi:2018}:
\begin{equation} \label{eq:minfdef}
\min_{U\in\mathbb O^{n \times r}}  \max\bigg\{ \loss{{\!_A}}(U),\ \loss_{{\!_B}}(U)\bigg\}.
\end{equation}
Intuitively, the model \eqref{eq:minfdef} minimizes the maximum reconstruction loss
to prevent a significant loss from disproportionately affecting any subgroup. 
See \Cref{fig:fairplot2} for an illustrative example.
Indeed, as shown in \Cref{thm:ustareq}, the solution of the minimax problem \eqref{eq:minfdef} 
guarantees fairness by achieving equal reconstruction loss for subgroups $A$ and $B$ as defined in~\eqref{eq:fairobj}. 
%justifying the notion of {\em Fair PCA}. 

%\subsection{Fair PCA as Min-Max of Two Traces}
\subsection{Fair PCA as trace optimization}
%The loss function in~\eqref{eq:loss} is actually a quadratic function in $U$,
The following lemma shows that the loss function~\eqref{eq:loss} can be written as a trace function. Subsequently, we can recast the FPCA model~\eqref{eq:minfdef} as 
minimizing the maximum of two traces.

\begin{lemma} \label{lemma:traceformulation}
Let $D\in\RR^{p\times n}$, $U\in \mathbb{O}^{n \times r}$, and $\loss_{\!_D}(U)$ be as defined in~\eqref{eq:loss}. 
We have 
\begin{equation}\label{eq:lossd}
\loss_{\!_D}(U) \equiv  \Tr(U^{T} H_{\!_D} U),
\end{equation}
where  $H_{\!_D}\in  \RR^{n \times n}$ is  a symmetric matrix given by
\begin{equation} \label{eq:matd}
    H_{\!_D}  \equiv
    \frac{1}{p} \left( \left[\frac{1}{r}  \sum\limits_{i = 1}^{r} \sigma^2_{i}(D)\right]
    \cdot I_{n} - D^{T} D \right),
\end{equation}
where $\sigma_i(D)$ denotes the $i$-th largest singular value of $D$.
\end{lemma}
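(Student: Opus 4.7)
\medskip

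\noindent\textbf{Proof plan.} The plan is to carry out a direct trace computation: expand the two Frobenius norms in the definition of $\loss_{\!_D}(U)$, use the orthogonality relation $U^{T}U = I_{r}$ to simplify, and then recognize the result as $\Tr(U^{T}H_{\!_D}U)$ with $H_{\!_D}$ as claimed.

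\medskip

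\noindent First, I would expand the generic term $\|D - DUU^{T}\|_F^2$ via the trace identity $\|X\|_F^2 = \Tr(X^{T}X)$. Using $U^{T}U = I_r$ together with cyclicity of the trace, the cross and quadratic terms both collapse to $\Tr(U^{T}D^{T}DU)$, giving
\begin{equation*}
\|D - DUU^{T}\|_F^2 \;=\; \Tr(D^{T}D) \;-\; \Tr(U^{T}D^{T}DU).
\end{equation*}
Applied to the specific choice $U = U_{\!_D}$, the same identity combined with the fact that $U_{\!_D}$ collects orthonormal eigenvectors associated with the $r$ largest eigenvalues of $D^{T}D$ yields
\begin{equation*}
\Tr(U_{\!_D}^{T}D^{T}DU_{\!_D}) \;=\; \sum_{i=1}^{r}\sigma_i^{2}(D),
\end{equation*}
since the nonzero eigenvalues of $D^{T}D$ are exactly the squared singular values of $D$.

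\medskip

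\noindent Substituting these two expressions into the definition \eqref{eq:loss} cancels the common $\Tr(D^{T}D)$ terms and leaves
\begin{equation*}
\loss_{\!_D}(U) \;=\; \frac{1}{p}\left(\sum_{i=1}^{r}\sigma_i^{2}(D) \;-\; \Tr(U^{T}D^{T}DU)\right).
\end{equation*}
On the other hand, plugging the candidate $H_{\!_D}$ from \eqref{eq:matd} into $\Tr(U^{T}H_{\!_D}U)$ and using $\Tr(U^{T}I_n U) = \Tr(U^{T}U) = \Tr(I_r) = r$ gives exactly the same right-hand side, the factor $\tfrac{1}{r}$ in $H_{\!_D}$ being there precisely to cancel this $r$. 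Matching the two expressions establishes \eqref{eq:lossd}.

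\medskip

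\noindent There is no real obstacle here: the lemma amounts to a careful bookkeeping of trace identities. The only point worth stating explicitly is the identification $\Tr(U_{\!_D}^{T}D^{T}DU_{\!_D}) = \sum_{i=1}^{r}\sigma_i^{2}(D)$, which justifies the scalar shift appearing in $H_{\!_D}$ and explains the otherwise unusual factor $1/r$.
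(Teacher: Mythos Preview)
Your proposal is correct and follows essentially the same route as the paper: expand the Frobenius norms as traces to obtain $\loss_{\!_D}(U)=\tfrac{1}{p}\bigl(\sum_{i=1}^r\sigma_i^2(D)-\Tr(U^TD^TDU)\bigr)$, then use $\Tr(U^TI_nU)=r$ to match this with $\Tr(U^TH_{\!_D}U)$. The paper packages the last step slightly differently (writing the scalar $\sum_i\sigma_i^2(D)$ directly as $\Tr\bigl(U^T[\tfrac{1}{r}\sum_i\sigma_i^2(D)\,I_n]U\bigr)$ and then combining), but the argument is the same.
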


\begin{proof} 
 By a straightforward derivation, we obtain the identity 
    \begin{align*}
        \norm{D(I_n -  U U^{T})}_{F}^2   = 
        \Tr\Big(D(I_n-UU^T)D^T\Big)=
        &\Tr(D^{T} D) -\Tr(U^{T} D^{T} D U),
    \end{align*}
    where we used $\|M\|_F^2\equiv \Tr(MM^T)$ and $U^TU=I_r$ in the first equation, 
    and $\Tr(AB)=\Tr(BA)$ in the second equation.
    Consequently, 
    \begin{align*}
    \loss_{D} (U) 
    \equiv \frac{1}{p} 
    \bigg(\norm{D (I_n- U U^{T})}_{F}^{2} - \norm{D (I_n-  U_{\!_D} U_{\!_D}^{T})}_{F}^{2}\bigg)
    =\frac{1}{p} \bigg(\Tr(U_{\!_D}^{T} D^{T} D U_{\!_D}) - \Tr(U^{T} D^{T} D U)\bigg).
    \end{align*}
    Since $U_{\!_D}$ contains the eigenvectors for the $r$ largest eigenvalues of $D^TD$, 
    or equivalently, the right singular vectors for the leading $r$ singular values of $D$,
    we have
    \[
    \Tr(U_{\!_D}^{T} D^{T} D U_{\!_D}) \equiv \sum_{i=1}^{r} \sigma_{i}(D)^2 
    = \Tr\left(U^T \left[ \frac{1}{r} \sum_{i=1}^{r} \sigma_{i}(D)^2\cdot  I_n\right] U\right).
    \]
    Combining the two equations from above, we proved~(3.4).
\end{proof}

By~\Cref{lemma:traceformulation},
we can reformulate the FPCA model~\eqref{eq:minfdef} 
as the minimax problem of two traces: 
\begin{equation} \label{eq:trminmax} 
\min_{U \in \mathbb O^{n \times r}}\max \bigg\{\Tr(U^{T} H_{\!_A} U),\ \Tr(U^{T} H_{\!_B} U)\bigg\},
\end{equation}
where 
\begin{equation} \label{eq:HAHBdef}
    H_{\!_A} =  \frac{1}{m_{1}} \left(
    \left[\frac{1}{r}  \sum\limits_{i = 1}^{r} \sigma^2_{i}(A)\right] \cdot I_{n} - A^{T} A \right)  \quad \mbox{and} \quad 
    H_{\!_B} =  \frac{1}{m_{2}} \left( \left[\frac{1}{r}  \sum\limits_{i = 1}^{r} \sigma^2_{i}(B)\right]  \cdot I_{n} - B^{T} B \right).
\end{equation}

An immediate benefit of formulating the FPCA model \eqref{eq:minfdef} to the minimax of two traces ~\eqref{eq:trminmax}
is a much simpler proof than in \cite{Samadi:2018} for the fairness condition~\eqref{eq:fairobj} of the optimal solution $U_*$. 

\begin{theorem} \label{thm:ustareq}
    %Let $U_*$ be a solution of the Fair PCA \eqref{eq:trminmax}. It must hold that 
    The  solution $U_*\in \mathbb O^{n \times r}$ of the minimax problem~\eqref{eq:trminmax} satisfies
\begin{equation} \label{eq:ustareq}
    \Tr(U_*^TH_{\!_A}U_*) =  \Tr(U_*^TH_{\!_B}U_*).
\end{equation}
\end{theorem}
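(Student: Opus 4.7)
The plan is to prove \eqref{eq:ustareq} by contradiction, using a local descent argument on the Stiefel manifold. Suppose the two traces disagree at $U_*$, and by symmetry assume $f_A(U_*) > f_B(U_*)$, where I abbreviate $f_A(U) := \Tr(U^T H_A U)$ and $f_B(U) := \Tr(U^T H_B U)$.

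First I would ``peel off'' the max. Since $f_A - f_B$ is continuous on $\mathbb O^{n\times r}$, the strict inequality $f_A(U_*) > f_B(U_*)$ persists on an open neighborhood $N$ of $U_*$. On $N$ the objective simplifies to $\max\{f_A,f_B\} = f_A$, so the minimax optimality of $U_*$ forces $U_*$ to be a local minimizer of $f_A$ over $\mathbb O^{n\times r}$.

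Next I would identify the value $f_A(U_*)$. By \Cref{lemma:traceformulation} together with \Cref{def:loss}, we have $f_A(U) = \loss_{\!_A}(U) \geq 0$ for every $U \in \mathbb O^{n\times r}$, with equality precisely when the column span of $U$ equals the leading $r$-dimensional eigenspace of $A^TA$, i.e.\ $U$ is a PCA basis for $A$. The assumption $f_A(U_*) > f_B(U_*) \geq 0$ then gives $f_A(U_*) > 0$, so $U_*$ is a local minimizer whose value strictly exceeds the global minimum $0$.

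The main obstacle is ruling out such a ``spurious'' local minimum of the trace function $\Tr(U^T H_A U)$ on the Stiefel manifold. My plan is an explicit eigenbasis perturbation: the first-order optimality condition at $U_*$ forces $\mathrm{range}(U_*)$ to be $H_A$-invariant, so, after an inner orthogonal rotation in $O(r)$ (which leaves $f_A$ unchanged), the columns of $U_*$ may be taken as eigenvectors of $H_A$. If these eigenvectors do not correspond to the $r$ smallest eigenvalues of $H_A$ (equivalently, the top-$r$ eigenvalues of $A^TA$), then there exists an omitted eigenvector with a strictly smaller eigenvalue than some chosen one; rotating in the 2-plane spanned by this pair produces a feasible perturbation arbitrarily close to $U_*$ that strictly decreases $f_A$, contradicting the local minimality established in the first step. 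Hence $f_A(U_*) = 0$, which contradicts $f_A(U_*) > 0$, and \eqref{eq:ustareq} follows.
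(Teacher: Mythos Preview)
Your proposal is correct and follows essentially the same route as the paper: assume a strict inequality, use continuity to reduce to local minimality of $f_A$ on the Stiefel manifold, argue that such a local minimizer must be global (hence $f_A(U_*)=0$), and derive a contradiction with $f_A(U_*)>f_B(U_*)\ge 0$. The only difference is that the paper dispatches the ``local $\Rightarrow$ global'' step by citing a known result on trace minimization over $\mathbb O^{n\times r}$, whereas you supply a self-contained eigenbasis/rotation argument; both lead to the same conclusion.
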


\begin{proof}
By contradiction,  
assume that the equality in~\eqref{eq:ustareq} does not hold.
Without loss of generality, suppose that 
\begin{equation}\label{eq:trfact1}
\Tr(U_*^TH_{\!_A}U_*) >  \Tr(U_*^TH_{\!_B}U_*).
\end{equation}
Since $\Tr(\cdot)$ is a continuous function, the inequality~\eqref{eq:trfact1} implies 
for all $U\in \mathbb O^{n \times r}$ %that is 
sufficiently close to $U_*$,
\begin{equation*}%\label{eq:trfact2}
 \Tr(U^{T} H_{\!_{A}} U) > \Tr(U^{T} H_{\!_{B}} U).
\end{equation*}
Consequently, for all $U\in \mathbb O^{n \times r}$ that is sufficiently close to $U_*$, we have
\begin{equation*} %\label{eq:trfact1}
\Tr(U^{T} H_{\!_{A}} U) \equiv 
\max \Big\{ \Tr(U^{T} H_{\!_{A}} U), \Tr(U^{T} H_{\!_{B}} U)\Big\} \geq \max \Big\{ \Tr(U_{*}^{T} H_{\!_{A}} U_{*}), \Tr(U_{*}^{T} H_{\!_{B}} U_{*})\Big\} \equiv \Tr(U_{*}^{T} H_{\!_{A}} U_{*}),
\end{equation*}
where the inequality is  due to $U_{*}$ being a minimal solution of the 
FPCA \eqref{eq:trminmax}, that is
$U_*$ is a local minimum of the trace minimization 
\begin{equation}\label{eq:a_tracemin}
\min_{U\in \mathbb O^{n \times r}} \Tr(U^{T} H_{\!_{A}} U).
\end{equation}
Recalling that  any local minimizer of the trace minimization~\eqref{eq:a_tracemin} 
must be a global minimizer 
\citep{Striko:1995}, 
% (Kova\v{c}-Striko and Veseli\'{c}, 1995\footnote{
% J. Kova\v{c}-Striko and K. Veseli\'{c} (1995).
% "Trace minimization and definiteness of symmetric pencils".
% \textit{Linear Algebra Appl.}, 216, 139--158,}),
$U_*$ must be a global minimizer of~\eqref{eq:a_tracemin} as well.
It then follows that 
\begin{equation} \label{eq:localglobalmin}
\Tr(U_*^{T} H_{\!_{A}} U_*) 
=
\min\limits_{U  \in \mathbb O^{n \times r}}\Tr(U^{T} H_{\!_{A}} U) 
= 
0,
\end{equation}
where the second equation is due to the fact
that 
$\Tr(U^{T} H_{\!_{A}} U) \equiv \loss_{\! A}(U)\geq 0$
with equality holding at the PCA solution $U_A\in\OO^{n\times r}$ for the matrix $A$. 
The equality~\eqref{eq:localglobalmin} and the inequality \eqref{eq:trfact1} lead to 
$0  >  \Tr(U_*^TH_{\!_B}U_*) \equiv \loss_{\! B} (U_*) \geq 0$,
which is a contradiction. 
\end{proof}

%\subsection{FPCAviaSDR: Brief Summary} \label{sec:FPCAviaSDR}
\subsection{The SDR-based algorithm} \label{sec:FPCAviaSDR}

The FPCA~\eqref{eq:trminmax} involves the minimax of two traces over 
the Stiefel manifold $\mathbb O^{n \times r}$.
It is a non-convex optimization.
To solve this problem,
\cite{Samadi:2018} developed a semidefinite relaxation (SDR) approach.
The key ideas behind their algorithm can be summarized as follows.

By the identity $\Tr(AB) = \Tr(BA)$, the minimax problem~\eqref{eq:trminmax} 
can be written in terms of the projection matrix $P=UU^T$ as %follows
\begin{equation}\label{eq:traceprojection}
\min_{P=UU^T,\ U \in \mathbb O^{n \times r}}\!\!\!\max \Big\{\Tr(H_{\!_A} P),\ \Tr( H_{\!_B} P)\Big\}.
\end{equation}
Note that the objective function of the minimization is convex in $P$.
The problem is then transformed into a convex optimization by 
{\em relaxing} feasible set $\{P=UU^T\colon\ U \in \mathbb O^{n \times r}\}$  to 
\begin{equation}\label{eq:lprelax}
\{P\in\RR^{n\times n}\ \colon\  \Tr(P)\leq r,\ 0\preceq P\preceq I\}.
\end{equation}
This relaxation leads to a semidefinite programming (SDP) problem, 
which can be solved using existing techniques.
Once $P$ is computed, a linear programming (LP) step is applied to {\em correct}
the rank of $P$.

This SDP and LP-based approach  
has two major drawbacks.
First, the algorithm produces an approximate projection 
$\widehat P\in\RR^{n\times n}$, instead of an orthogonal basis $U\in\mathbb O^{n \times r}$.
Due to SDR and computational error, the resulting 
$\widehat P$ may
{fail to recover the orthogonal projection} $UU^T$ of rank $r$.
%As shown by \cite{Samadi:2018},
%In fact,  it is possible that $\mbox{rank}(\widehat P)=r+1$,
%introducing an {\em extra} dimension
%and thus a {\em sub-optimal} solution in this sense. 
%\Red{In practice, due to computational errors, 
%the numerical rank of the computed $\widehat P$  can become even higher.}
Secondly,  this approach is 
{expensive},
with a theoretical runtime of  $O \left({n^3}/{\text{tol}^2} \right)$
where $\text{tol}$ is the error tolerance of the SDP and LP.  As reported in~\cite{Samadi:2018}, the runtime is
``\textit{at most $10$ to $15$ times}'' slower than the standard PCA.

%------------------------
%\subsection{Fair PCA as a Convex Optimization}
\subsection{Hidden convexity}\label{sec:hidden}

In this section, we will uncover a hidden convexity of the FPCA model~\eqref{eq:trminmax}
by a change of variables. 
This allows us to visualize the optimization~\eqref{eq:trminmax},
as well as  reformulate it as a convex optimization problem.

\paragraph{Joint numerical range.}
A $r$-th joint numerical range of a pair of 
symmetric matrices $S,T \in \RR^{n \times n}$ is defined as: 
\begin{equation*}%\label{eq:jnr}
\mathcal{W}_{r}(S, T) = 
\left\{ 
\begin{bmatrix} \Tr(U^{T} S U) \\ \Tr(U^{T} T U) \end{bmatrix}\in\mathbb R^2 
: U \in \mathbb O^{n \times r}
\right\}.
\end{equation*}
%Clearly, 
The set 
$\mathcal{W}_{r}(S, T)$ is a bounded and closed 
subset of  $\RR^{2}$, since $\mathbb O^{n \times r}$ is closed and $\Tr(\cdot)$
is a continuous function \citep{Li:1994}. 
It is also known that $\mathcal{W}_{r}(S, T)$ is a convex subset of $\RR^{2}$ 
when the size of the matrices $n > 2$~\citep{Au:1984}.

\paragraph{Generating the joint numerical range.}
For visualization, we can generate the convex joint numerical range $\mathcal{W}_{r} (S, T)$  
by sampling its boundary points. In the following, we show that 
a boundary point can be obtained by solving a symmetric eigenvalue problem. 
This method extends the existing approach for computing the 
numerical range of a square matrix (see, e.g., 
\cite{Johnson:1978}).
% Johnson, 1978)\footnote{C. R. Johnson (1978).
% "Numerical Determination of the Field of Values of a General Complex Matrix".
% \textit{SIAM J. Numer. Anal.}, 15.3, 595-602.}.

Given a search direction $v:=[\cos(\theta),\sin(\theta)]^T \in\RR^2$ with $\theta\in(0,2\pi]$,
the boundary point $y_{\theta}$ of  $\mathcal{W}_{r} (S, T)$ with an outer normal vector $v$
can be obtained by solving the optimization problem 
\begin{equation}\label{eq:nrbdp}
y_{\theta} =\mathop{\arg\max}_{y\in  \mathcal{W}_{r} (S, T)} \, v^Ty,
\end{equation}
%where we seek the most distant point in $\mathcal{W}_{r} (S, T)$ along direction $v$.
By parameterizing $y\in \mathcal{W}_{r} (S, T)$ 
as $y=[\Tr(U^TSU),\Tr(U^TTU)]^T$ for $U\in\OO^{n\times r}$, the maximization~\eqref{eq:nrbdp} is rewritten as
\[
\max_{y\in  \mathcal{W}_{r} (S, T)}  v^Ty = 
\max_{U\in\OO^{n\times r}}  \Tr\Big(U^T[\cos(\theta)\cdot S + \sin(\theta)\cdot T]U\Big).
\]
%where $v=[\cos(\theta),\sin(\theta)]^T$.
By Ky Fan's trace optimization principle, the solution $U_{\theta}$ to the above trace maximization  is given by the orthogonal eigenvectors corresponding to the largest $r$ eigenvalues of the matrix 
\begin{equation*}%\label{eq:btheta}
B(\theta) := \cos(\theta) S + \sin(\theta) T.
\end{equation*}
Thus,  the boundary point along the direction $v$ is given by
\begin{equation*}%\label{eq:ytheta}
y_\theta= [\text{Tr}(U_\theta^{T} S U_\theta), \text{Tr}(U_\theta^{T} T U_\theta)]^T.
\end{equation*}
By searching in different directions, such as using equally spaced angles  $\theta_j$  in $(0,2\pi]$, 
we can sample a finite number of boundary points of $\mathcal{W}_{r} (S, T)$. 
The convex hull of these boundary points generates an approximate joint numerical range, as shown in Figure~\ref{fig:geom_interpolation}. 

\begin{figure}[]
    \centering
    \includegraphics[width=0.5\linewidth]{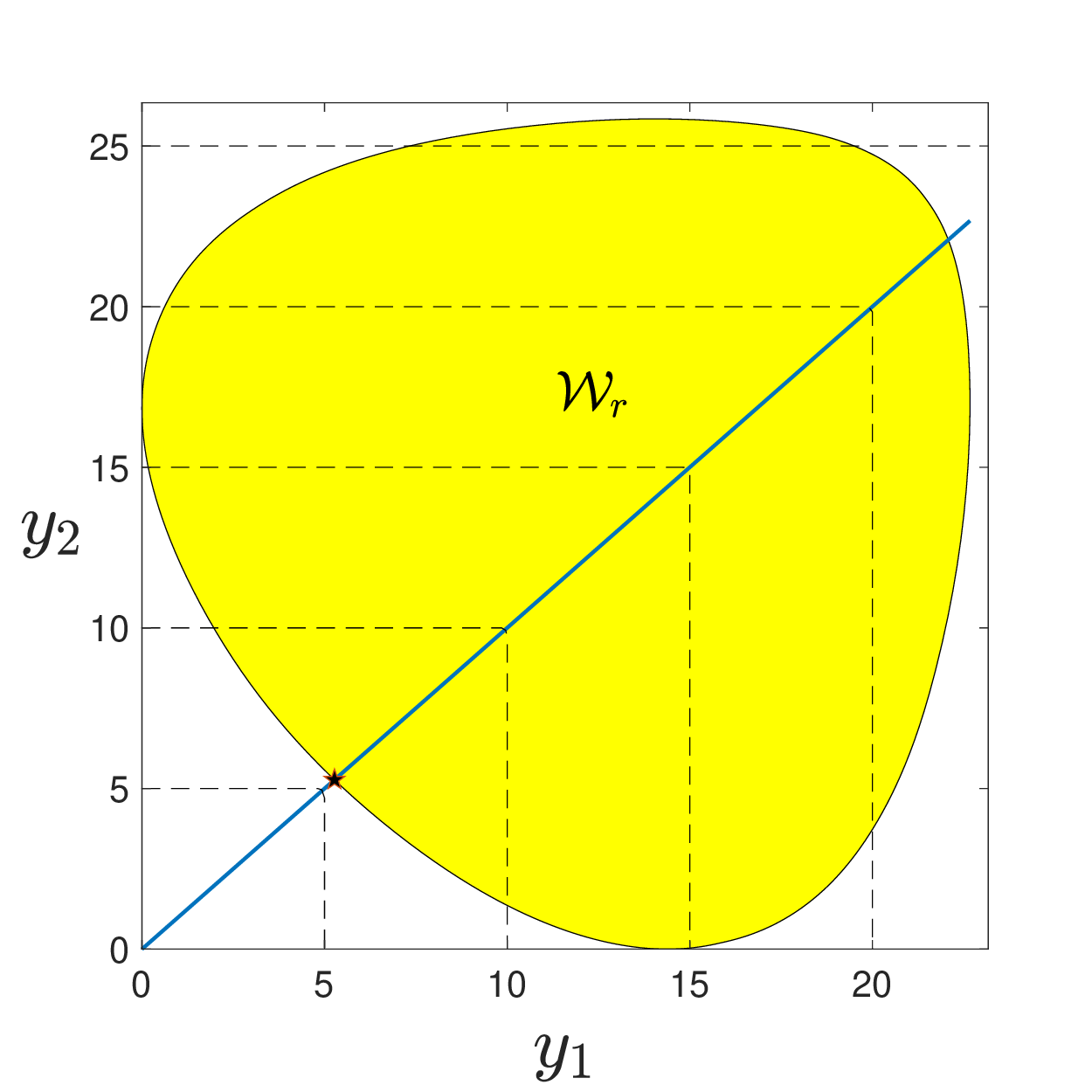}
    \caption{Geometric illustration of  the FPCA \eqref{eq:trminmax}:
    The yellow region is the joint numerical range $\mathcal W_{r}(H_{\!_A},H_{\!_B})$.
    Each dashed line is a contour of the `max' function, i.e., solution of $\max\{y_1,y_2\} = c$ 
    for a given constant $c$.
    The solid blue line is with $y_1 = y_2$.
    The star marks the optimal solution $y_*$ of~\eqref{eq:trminmax}.
    }
    \label{fig:geom_interpolation}
\end{figure}

The overall procedure for generating the joint numerical range is summarized in \Cref{alg:jnr}. 

\begin{algorithm}[]
\caption{Generating the joint numerical range $\mathcal{W}_{r}(S, T)$}
\label{alg:jnr}
\begin{algorithmic}[1]

\REQUIRE{Symmetric matrices $S \in \RR^{n \times n}$, $T \in \RR^{n \times n}$,
dimension $r$, and number of angle samples $\ell$.}

\ENSURE{Approximate $\mathcal{W}_{r}(S, T)$
by the convex hull of the boundary points $\{y_1,\dots,y_\ell\}$.}

%\STATE Initialize empty arrays $X$ and $Y$ to store boundary points.

%\STATE Create a set $\Theta$ of evenly spaced angles $\theta \in [0, 2\pi)$ with $l$ points.
\STATE Set step size for search angles in $(0,2\pi]$ as $h = 2\pi/\ell$; 

\FOR{$j=1,2,\dots,\ell$}
    \STATE Set the search angle $\theta= jh$;
   % \STATE Form the matrix $B(\theta) = \cos(\theta) S + \sin(\theta) T$;
    
    \STATE Compute eigenvectors  $U_\theta$ corresponding to 
    the largest $r$ eigenvalues of $B(\theta) = \cos(\theta) S + \sin(\theta) T$;

    \STATE Compute the boundary point coordinates
    $y_j= [\text{Tr}(U_\theta^{T} S U_\theta), \text{Tr}(U_\theta^{T} T U_\theta)]^T$.
    
\ENDFOR

\STATE Return the convex hull of $\{y_1,\dots,y_\ell\}$
as an approximation of $\mathcal{W}_{r}(S, T)$.

\end{algorithmic} 
\end{algorithm}

%\paragraph{Optimization of Fair PCA \eqref{eq:trminmax} over Joint Numerical Range.} 
\paragraph{Optimization over the joint numerical range.} 
Let  $y\in\mathbb R^2$ be given by 
\begin{equation} \label{eq:y1y2}
y = 
\begin{bmatrix} y_1 \\ y_2  \end{bmatrix}
\equiv 
\begin{bmatrix} \Tr(U^{T} H_{\!_A} U) \\ \Tr(U^{T} H_{\!_B} U)  \end{bmatrix}.
\end{equation}
By a change of variables from $U$ to $y$, 
we can reformulate the FPCA model \eqref{eq:trminmax} 
as the following minimization problem over the joint numerical range:
\begin{equation} \label{eq:fairPCAdef4}
    \min_{y \in \mathcal{W}_{r}(H_{\!_A},H_{\!_B})} \max \{y_{1},y_{2}\}.
\end{equation}
Observe that the objective function $\max \{y_{1}, y_{2}\}$ is a convex function 
in $y \in \RR^{2}$  \citep[pp.72]{Boyd:2004}.
Moreover, the feasible set $\mathcal{W}_{r}(H_{\!_A}, H_{\!_B})$ 
is a convex set under the general assumption of $n > 2$.
Therefore, the optimization~\eqref{eq:fairPCAdef4} is 
a convex optimization problem when $n > 2$.\footnote{
The requirement $n>2$ is not essential,
as we can instead use the joint numerical range 
with complex orthogonal $U\in\mathbb C^{n\times r}$,
which is always convex 
% (see appendix)
.}

\paragraph{Geometric interpretation of FPCA.} 
The convex optimization problem~\eqref{eq:fairPCAdef4}
involves only two variables $y_1$ and $y_2$. 
Combining with the convexity of the joint numerical range $\mathcal W_r(H_{\!_A},H_{\!_B})$, it 
allows us to visualize the solution of the FPCA. 
Figure~\ref{fig:geom_interpolation} depicts 
$\mathcal W_r(H_{\!_A},H_{\!_B})$ for a random example,
along with the contours of the objective function  $\max\{y_1,y_2\}$.
The figure shows that the optimal solution $y_*$ of~\eqref{eq:fairPCAdef4} 
is achieved at the intersection of  $\mathcal W_r(H_{\!_A},H_{\!_B})$
and the diagonal of Cartesian coordinates (i.e. $y_1=y_2$),
visually confirming the fairness of the solution of the FPCA as described
in~\Cref{thm:ustareq}.

This visualization also provides a geometric interpretation of~\Cref{thm:ustareq},
explaining why a fair solution is always achievable.
For the matrices $(H_{\!_A},H_{\!_B})$ of the FPCA model~\eqref{eq:trminmax}, the joint numerical range $\mathcal W_r(H_{\!_A},H_{\!_B})$ 
always lies in the first quadrant of the coordinate plane, and  intersects with both the vertical and horizontal axes
(as shown in Figure~\ref{fig:geom_interpolation}).
%This is guaranteed by~\Cref{lemma:traceformulation}, which states that 
These follow from~\eqref{eq:lossd} and~\eqref{eq:loss}, which imply  that 
$y_1 = \Tr(U^TH_{\!_A}U) = \loss_{\!_A}(U)\geq 0$  for all $U\in\mathbb O^{n \times r}$
with equality holding at  the PCA solution $U=U_{\!_A}$ for the matrix $A$,
and 
$y_2 = \Tr(U^TH_{\!_B}U) = \loss_{\!_B}(U) \geq 0$  for all $U\in\mathbb O^{n \times r}$
with equality holding at $U=U_{\!_B}$.
Consequently, the intersection of  $\mathcal W_r(H_{\!_A},H_{\!_B})$ 
with the diagonal line always exists
and corresponds to the solution of the FPCA \eqref{eq:trminmax}.

\section{Algorithm} \label{sec:algorithm}
In this section,  we present an alternative algorithm 
for the FPCA \eqref{eq:trminmax} via the convex optimization~\eqref{eq:fairPCAdef4}
and a univariate eigenvalue optimization.

\subsection{Eigenvalue optimization} 
By the geometric interpolation as shown in~Figure~\ref{fig:geom_interpolation}, 
we can search along the boundary of 
the joint numerical range $\mathcal W_r(H_{\!_A},H_{\!_B})$ for the optimal $y_*$ of  \eqref{eq:fairPCAdef4}.
The following theorem shows that the optimal $y_*$ can be obtained via eigenvalue optimization of the 
following symmetric matrix function over $t\in[0,1]$:
\begin{equation}\label{eq:ht}
H(t)\equiv t \cdot H_{\!_A} + (1-t) \cdot H_{\!_B}.
\end{equation}

\begin{theorem}\label{thm:eigopt}
The optimal solution of the FPCA~\eqref{eq:fairPCAdef4} is given by 
\begin{equation}\label{eq:ystar}
y_* =  \begin{bmatrix} \Tr(U_*^TH_{\!_A}U_*)\\ \Tr(U_*^TH_{\!_B}U_*)\end{bmatrix}
%\in\mathcal W_r(H_{\!_A},H_{\!_B}),
\end{equation}
where  $U_*\in\mathbb O^{n \times r}$ is a basis matrix 
for the eigenspace corresponding to the $r$ smallest eigenvalues
of $H(t_*)$, and $t_*$ is the solution of the following eigenvalue optimization 
\begin{equation} \label{eq:maxphi}
    \max_{t \in [0,1]} \left\{ \phi(t) \equiv \sum_{i = 1}^{r} \lambda_i(H(t)) \right\},
\end{equation}
% where $\lambda_i(\cdot)$ denotes the $i$-th smallest eigenvalue.

\end{theorem}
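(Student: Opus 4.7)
The plan is to prove the theorem by a minimax-exchange argument that reduces the two-dimensional convex optimization~\eqref{eq:fairPCAdef4} on the joint numerical range to a one-dimensional concave eigenvalue optimization. The two main ingredients are Sion's minimax theorem (enabled by the convexity of $\mathcal{W}_r(H_{\!_A},H_{\!_B})$ established in \Cref{sec:hidden}) and Ky Fan's trace minimization principle.

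First I would replace the convex objective by its variational form
\[
\max\{y_1,y_2\} \;=\; \max_{t \in [0,1]} \big[t\, y_1 + (1-t)\, y_2\big],
\]
which recasts~\eqref{eq:fairPCAdef4} as the minimax problem
\[
p^* \;:=\; \min_{y \in \mathcal{W}_r(H_{\!_A},H_{\!_B})} \, \max_{t \in [0,1]} \big[t\, y_1 + (1-t)\, y_2\big].
\]
Since $\mathcal{W}_r(H_{\!_A},H_{\!_B})$ is compact and convex (for $n>2$), $[0,1]$ is compact and convex, and the kernel $(y,t)\mapsto t\,y_1+(1-t)\,y_2$ is linear (hence concave-convex) in each argument separately, Sion's minimax theorem permits the exchange
\[
p^* \;=\; \max_{t \in [0,1]} \, \min_{y \in \mathcal{W}_r(H_{\!_A},H_{\!_B})} \big[t\, y_1 + (1-t)\, y_2\big].
\]
Parametrizing $y=[\Tr(U^T H_{\!_A} U),\Tr(U^T H_{\!_B} U)]^T$ with $U\in\mathbb O^{n\times r}$, the inner minimization becomes $\min_U \Tr(U^T H(t)U)$, which by Ky Fan's trace minimization principle equals $\sum_{i=1}^r \lambda_i(H(t))=\phi(t)$, attained at any orthonormal basis $U$ of the eigenspace associated with the $r$ smallest eigenvalues of $H(t)$. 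Consequently $p^* = \max_{t\in[0,1]}\phi(t) = \phi(t_*)$, which is the asserted value identity in~\eqref{eq:maxphi}.

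Next I would recover the primal minimizer. Taking $U_*$ as in the theorem and defining $y_*$ by~\eqref{eq:ystar}, I have $y_*\in\mathcal{W}_r(H_{\!_A},H_{\!_B})$ together with
\[
t_* y_{*,1} + (1-t_*) y_{*,2} \;=\; \Tr(U_*^T H(t_*) U_*) \;=\; \phi(t_*) \;=\; p^*.
\]
It remains to verify $\max\{y_{*,1},y_{*,2}\}=p^*$. Because both the inner min and outer max are attained and share the common value $p^*=\phi(t_*)$, the pair $(y_*,t_*)$ is automatically a saddle point of $f(y,t):=t y_1+(1-t)y_2$; in particular, $t_*$ maximizes $t\mapsto t\,y_{*,1}+(1-t)\,y_{*,2}$ on $[0,1]$. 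A short case analysis on whether $t_*\in(0,1)$ (in which case the linear function must be constant, forcing $y_{*,1}=y_{*,2}$) or $t_*\in\{0,1\}$ (handled by the sign of $y_{*,1}-y_{*,2}$) then yields $\max\{y_{*,1},y_{*,2}\}= t_*y_{*,1}+(1-t_*)y_{*,2}=p^*$, so $y_*$ solves~\eqref{eq:fairPCAdef4} and $U_*$ solves~\eqref{eq:trminmax}.

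The main obstacle is the possibility of spectral degeneracy $\lambda_r(H(t_*))=\lambda_{r+1}(H(t_*))$, in which case the eigenspace for the $r$ smallest eigenvalues is more than $r$-dimensional and the Ky Fan minimum is attained on a strictly larger set than a single right-orthogonal orbit; not every resulting $y_*$ need satisfy $y_{*,1}=y_{*,2}$. I would sidestep this by arguing existence rather than uniqueness: Sion's theorem guarantees a saddle point $(y_*,t_*)$, so at least one admissible basis $U_*$ inside this enlarged eigenspace reproduces the optimal $y_*$, and \Cref{thm:ustareq} certifies $y_{*,1}=y_{*,2}$ at this selection. In practice a small orthogonal adjustment within the degenerate eigenspace equalizes the two traces, and this selection step integrates naturally into the algorithm to be presented in~\Cref{sec:algorithm}.
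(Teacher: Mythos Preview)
Your strategy---rewrite $\max\{y_1,y_2\}$ variationally, apply Sion's minimax theorem, and reduce the inner minimization to Ky Fan's trace principle---is exactly the paper's approach, and your value identity $p^*=\phi(t_*)$ is obtained identically.

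The gap is in the recovery step. You assert that because the inner min and outer max share the value $p^*$, the pair $(y_*,t_*)$ is ``automatically a saddle point.'' That inference requires $y_*$ to be a \emph{primal} minimizer of $\max_t f(\cdot,t)$, but you have only defined $y_*$ as an \emph{inner} minimizer of $f(\cdot,t_*)$. These coincide when the inner minimizer at $t_*$ is unique (the non-degenerate case $\lambda_r(H(t_*))<\lambda_{r+1}(H(t_*))$), but in the degenerate case there are many inner minimizers and most of them are not saddle-point components; for such a choice your case analysis on $t_*$ never gets off the ground because $t_*$ need not maximize $t\mapsto t\,y_{*,1}+(1-t)\,y_{*,2}$. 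You recognize this in your final paragraph, but the preceding claim is stated too strongly.

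The paper sidesteps this by running the argument in the opposite direction: it starts from the primal minimizer $y_*$ (which exists by compactness), writes $y_*=[\Tr(U_*^TH_{\!_A}U_*),\Tr(U_*^TH_{\!_B}U_*)]^T$ for some $U_*\in\mathbb O^{n\times r}$, and then uses the chain
\[
\phi(t_*)=\max\{y_{*,1},y_{*,2}\}\ \geq\ t_* y_{*,1}+(1-t_*)y_{*,2}\ \geq\ \phi(t_*)
\]
to force $\Tr(U_*^TH(t_*)U_*)=\sum_{i=1}^r\lambda_i(H(t_*))$, whence $U_*$ must be an eigenbasis by Ky Fan. This direction proves existence of an eigenbasis realizing $y_*$ without any saddle-point claim, and handles degeneracy for free. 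The paper also treats the low-dimensional cases $n\le 2$ separately (via the complex joint numerical range), which you only mention parenthetically.
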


\begin{proof} 
Let us first prove the theorem for the general cases where the matrix size $n>2$.
We begin by parameterizing the  FPCA \eqref{eq:fairPCAdef4} model: 
\begin{align}
\min_{y \in \mathcal{W}_{r}(H_{\!_A},H_{\!_B})}\ \max\{y_1,y_2\} 
&=  \min_{y \in \mathcal{W}_{r}(H_{\!_A},H_{\!_B})}\ \max_{t \in [0,1]} 
    \big[   t \cdot y_1 + (1 - t) \cdot y_2 \big] \notag\\
&=   \max_{t \in [0,1]}\ \min_{y \in  \mathcal{W}_{r}(H_{\!_A},H_{\!_B})}  
    \big[   t \cdot y_1 + (1 - t) \cdot y_2 \big], \label{eq:minmax}
\end{align}
where the first equality is established by a straightforward verification,
and the second equality is from a generalized von Neumann's minimax theorem 
\citep{Sion:1958}.
% (Sion, 1958)\footnote{M. Sion (1958). 
% "On General Minimax Theorems".
% \textit{Pacific J. Math.}, 8, 171--176.},
%due to the fact 
%that the objective function is affine in $y$ and $t$ and 
%both feasible sets $\mathcal{W}_{r}(H_{\!_A},H_{\!_B}) $ and $[0,1]$ are convex set. 
We can use this theorem because the objective function is affine in $y$ and $t$ and 
both feasible sets, $\mathcal{W}_{r}(H_{\!_A},H_{\!_B}) $ and $[0,1]$, are convex.

We observe that the inner minimization in~\eqref{eq:minmax}
has a closed-form solution as derived in the following:
\begin{align}
    \min_{y \in \mathcal{W}_{r}(H_{\!_A},H_{\!_B})}   \big[   t \cdot y_1 + (1 - t) \cdot y_2 \big] 
  &= \min_{U\in\mathbb O^{n \times r}} \big[t \cdot  \Tr(U^TH_{\!_A}U) + (1-t)\cdot   \Tr(U^TH_{\!_B}U)\big] \notag\\
  &= \min_{U\in\mathbb O^{n \times r}} \Tr(U^TH(t)U)  = \sum_{i=1}^{r} \lambda_i(H(t)), \label{eq:sumeig}
\end{align}
where 
the first equation is by a parameterization of
%$y\in\mathcal{W}_{r}(H_{\!_A},H_{\!_B})}$ by 
$y =\begin{bmatrix} \Tr(U^TH_{\!_A}U), \Tr(U^TH_{\!_B}U)  \end{bmatrix}^T$
for $U\in\mathbb O^{n \times r}$,
the second equation is by the definition of $H(t)$ in~(4.1),
and  the last equation is by Ky Fan's eigenvalue minimization principle 
\citep{KyFan:1949}, 
% (Ky Fan, 1949)\footnote{\label{ft:kyfan}Ky Fan (1949). 
% "On a Theorem of Weyl Concerning Eigenvalues of Linear Transformations. I".
% \textit{Proc. Natl. Acad. Sci. U.S.A.}, 35.}
which implies the minimal trace is given by the sum of the $r$ smallest eigenvalues 
of $H(t)$.
Hence, plugging the closed-form solution~\eqref{eq:sumeig} into 
the inner minimization of~\eqref{eq:minmax}, 
we write the 
FPCA model \eqref{eq:fairPCAdef4}
as an eigenvalue optimization problem:
\begin{equation}\label{eq:eigopt}
  \min_{y \in \mathcal{W}_{r}(H_{\!_A},H_{\!_B})}\ \max\{y_1,y_2\} 
  =\max_{t\in[0,1]} \phi(t),
\end{equation}
where $\phi(t)= \sum\limits_{i=1}^{r} \lambda_i(H(t))$.

Now, we consider the relation between the 
solution  $y_*$ and $t_*$ of the two optimization 
problems in~\eqref{eq:eigopt}.
It follows from~\eqref{eq:eigopt} that
\[
   \phi(t_*)=\max\{y_{*1},y_{*2}\} \geq  t_*y_{*1}+(1-t_*)y_{*2}  \geq \phi(t_*),
\]
where the first inequality is due to $t_*\in[0,1]$ 
and the second  is due to~\eqref{eq:sumeig} with a fixed $t=t_*$.
%I'm matching the notation to the equation above(Aaron)
Since equalities must hold in the equation above, we have 
\[
 t_*y_{*1}+(1-t_*)y_{*2}  \equiv \phi(t_*).
\]
By the expression of 
$y_*$~(4.2),
this is equivalent to 
\[
%\Tr(U_*H(t_*)U_*) = \min_{U\in\mathbb O^{n \times r}}  \Tr(UH(t_*)U) .
\Tr(U^T_*H(t_*)U_*) = \sum_{i=1}^r \lambda_i(H(t_*)),
\]
which, according to Ky Fan's eigenvalue minimization principle 
\citep{KyFan:1949}
% (Ky Fan, 1949)\textsuperscript{\ref{ft:kyfan}}
implies  $U_*$ must be an eigenbasis for the $r$ smallest eigenvalues of $H(t_*)$.

The rest of the proof is to address the special cases with $n=1$ and $2$.
First for the cases $n=r=1$ and $n=r=2$, 
it follows directly from  the definitions 
of  $H_{\!_A}$ and  $H_{\!_B}$ in~(3.7) that 
\[
\Tr(U^T H_{\!_A} U) \equiv  \Tr(U^T H_{\!_B} U) \equiv 0,
\]
for all  $U\in\OO^{n\times r}$.
This implies $\mathcal W_r(H_{\!_A},H_{\!_B})=\{0\}$, a convex set, 
so the above proof for $n>2$ still holds.

It remains to consider the case where $n=2$ and $r=1$.
In this case,  the joint numerical range  $\mathcal W_r(H_{\!_A},H_{\!_B})$  must 
form  a {\em general ellipse} (i.e.,  either an ellipse, a circle, a line segment, or a point)
\citep{Brickman:1961}.
% (Brickman, 1961)\footnote{\label{ft:brickman}
% L. Brickman (1961). "On the Field of Values of a Matrix."
% \textit{Proc. Am. Math. Soc.}, 12.1, 61--66.}.
Therefore,   $\mathcal W_1(H_{\!_A},H_{\!_B})$ consists of the boundary points of
its convex hull $\mbox{Conv}(\mathcal W_1(H_{\!_A},H_{\!_B}))$,
the smallest convex set that contains $\mathcal W_1(H_{\!_A},H_{\!_B})$.
Consequently,  the optimization problem~\eqref{eq:traceprojection} can be solved over 
this convex hull as
\begin{equation}\label{eq:minconv}
    \min_{y \in \mathcal{W}_{1}(H_{\!_A},H_{\!_B})} \max \{y_{1},y_{2}\}
    = 
    \min_{y \in \mbox{Conv}(\mathcal{W}_{1} (H_{\!_A},H_{\!_B}))} \max \{y_{1},y_{2}\},
\end{equation}
where we used the fact that $ g(y):=\max \{y_{1},y_{2}\} $ has no stationary point, 
so its minimizer must occur on the boundary of the feasible set.

It is well-known that the convex hull of $\mathcal{W}_{1} (H_{\!_A},H_{\!_B})$ is
exactly its complex analogue, as defined by
\begin{equation}\label{eq:cjnr}
\mathcal{W}_{1}^{\CC}( H_{\!_A},H_{\!_B})
:= 
\left\{ 
\begin{bmatrix} \Tr(U^{H} H_{\!_A} U) \\ \Tr(U^{H} H_{\!_B}  U) \end{bmatrix}
: U \in \CC^{n \times 1}, U^{H} U = 1
\right\},
\end{equation}
where $\cdot^H$ denotes conjugate transpose, and the superscript $\CC$ in  $\mathcal{W}_{1}^{\CC}$ 
is to distinguish it from the previous definition of joint numerical range $\mathcal W_r$ in \Cref{sec:hidden}
where $U$ is a real matrix; see \cite{Brickman:1961}.
Then we can write~\eqref{eq:minconv} as
\begin{equation}\label{eq:minconv2}
    \min_{y \in \mathcal{W}_{1}(H_{\!_A},H_{\!_B})} \max \{y_{1},y_{2}\}
    = 
    \min_{y \in \mathcal{W}_{1}^{\CC}( H_{\!_A},H_{\!_B})}
    \max \{y_{1},y_{2}\}.
\end{equation}
Since $\mathcal{W}_{1}^{\CC}( H_{\!_A},H_{\!_B})$ is a convex set,
we can apply the same proof used for~\eqref{eq:eigopt} 
to the optimization in~\eqref{eq:cjnr}
to establish the eigenvalue optimization problem~\eqref{eq:maxphi}.
This completes the proof for the case of $n=1, 2$.
%the case of convex  $\mathcal{W}_{r}( H_{\!_A},H_{\!_B})$
%to the optimization in~\eqref{eq:minconv2} 
%and establish the eigenvalue optimization for $n=1, 2$.
\end{proof}

The following result shows that the eigenvalue function $\phi(t)$ is concave in
$t \in [0, 1]$, see Figure~\ref{fig:sample_phit} for an illustration.
As a result, the maximizer of $\phi(t)$ over $[0, 1]$ 
can be efficiently found using classical methods such as the golden section search.

\begin{lemma} \label{lemma:phiproperty}
The function $\phi(t)$ is continuous, piecewise smooth, and concave
over the interval $[0,1]$.
\end{lemma}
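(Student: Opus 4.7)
The plan is to exploit the variational characterization of the sum of the smallest $r$ eigenvalues given by Ky Fan's principle, namely
\[
\phi(t) \;=\; \sum_{i=1}^{r}\lambda_i(H(t)) \;=\; \min_{U\in\mathbb O^{n\times r}} \Tr\!\bigl(U^{T}H(t)U\bigr),
\]
which was already invoked in the proof of \Cref{thm:eigopt}. The three properties (continuity, concavity, piecewise smoothness) will each be obtained from this representation together with standard facts about symmetric-matrix perturbation.

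For \emph{concavity}, I would observe that for every fixed $U\in\mathbb O^{n\times r}$, the mapping
\[
f_U(t) \;:=\; \Tr(U^{T}H(t)U) \;=\; t\cdot\Tr(U^{T}H_{\!_A}U) \;+\; (1-t)\cdot\Tr(U^{T}H_{\!_B}U)
\]
is affine (in particular, concave) in $t$. Since $\phi(t)$ is the pointwise infimum of the family $\{f_U\}_{U\in\mathbb O^{n\times r}}$ of affine functions, it is concave on $[0,1]$ by a standard convex-analytic fact \citep[pp.~80]{Boyd:2004}. For \emph{continuity}, I would note that each eigenvalue $\lambda_i(H(t))$ depends continuously on the entries of $H(t)$, which in turn depend continuously (in fact linearly) on $t$; summing gives continuity of $\phi$ on $[0,1]$. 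Alternatively, continuity follows from concavity on an open interval together with continuity at the endpoints obtained from the min-of-continuous-functions representation over the compact set $\mathbb O^{n\times r}$.

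For \emph{piecewise smoothness}, the argument I would use is Rellich's theorem on analytic perturbation of symmetric matrices: since $H(t)=tH_{\!_A}+(1-t)H_{\!_B}$ depends analytically (indeed linearly) on $t$, there exist $n$ real-analytic functions $\mu_1(t),\dots,\mu_n(t)$ on $[0,1]$ that, for each $t$, enumerate the eigenvalues of $H(t)$ with multiplicity (though not necessarily in sorted order). The sorted eigenvalue $\lambda_i(H(t))$ is obtained from the $\mu_j(t)$'s by taking the $i$-th smallest; hence on any subinterval where the ordering of $\{\mu_j(t)\}$ is fixed, $\phi(t)$ is a sum of $r$ analytic functions and therefore smooth. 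The points where the ordering changes are zeros of products of differences $\mu_j(t)-\mu_k(t)$, each of which is real-analytic on $[0,1]$; such a function either vanishes identically (in which case the two branches coincide throughout and do not obstruct smoothness) or has only finitely many zeros in $[0,1]$. Thus $[0,1]$ decomposes into finitely many subintervals on each of which $\phi(t)$ is smooth (indeed analytic), proving piecewise smoothness.

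The main obstacle I anticipate is the piecewise-smoothness step, since $\lambda_r(H(t))$ itself is only Lipschitz (not smooth) when eigenvalues cross; one has to be careful to work with the analytic but unsorted branches $\mu_j(t)$ rather than the sorted $\lambda_i(H(t))$, and to verify that the sum of the $r$ smallest collectively behaves well across crossings that happen strictly inside the bottom $r$ block (such crossings do not affect the value of $\phi$). The other two properties are largely routine consequences of Ky Fan's principle.
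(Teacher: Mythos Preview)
Your proposal is correct and follows essentially the same approach as the paper: concavity via the Ky~Fan representation of $\phi(t)$ as a pointwise minimum of affine functions (the paper indexes the family by $y\in\mathcal W_r(H_{\!_A},H_{\!_B})$ rather than by $U$, which is equivalent), and piecewise smoothness via Rellich's analytic perturbation theory for symmetric matrices. Your treatment of the piecewise-smoothness step is in fact more explicit than the paper's, which simply cites Rellich for the piecewise analyticity of the sorted eigenvalues.
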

\begin{proof}
The piecewise smoothness of the eigenvalue function $\phi(t)$ follows directly from 
a  classical result in eigenvalue perturbation analysis, 
which states that the $i$-th eigenvalue $\lambda_i(H(t))$ of a symmetric matrix $H(t)$ 
is a piecewise analytic function of the entries of $H(t)$;
see, e.g., 
\citet[Chap.1]{Rellich:1969}.
% (Rellich, 1969, Thm 1, Chap.~1)\footnote{
% F. Rellich (1969). {\it Perturbation Theory of Eigenvalue Problems}, CRC Press.}.

The concavity of the eigenvalue function $\phi(t)$ is also well-known in convex analysis.
In our case, it can be quickly verified by~\eqref{eq:sumeig}, which states that
\[
\phi(t) \equiv \min_{y \in \mathcal{W}_{r}(H_{\!_A},H_{\!_B})} \Big\{ \phi_y(t):=   t \cdot y_1 + (1 - t) \cdot y_2  \Big\},
\]
namely, $\phi(t)$ is the pointwise minimum of a set of functions $\{\phi_y(t)\}$.
Since all $\phi_y(t)$ are concave functions in $t$,
we have 
$\phi(t)$ must be concave in $t$ as well; see, e.g., \citet[Sec.3.2.3]{Boyd:2004}.
\end{proof}

\begin{figure}[] 
    \centering
    \includegraphics[width=0.6\linewidth]{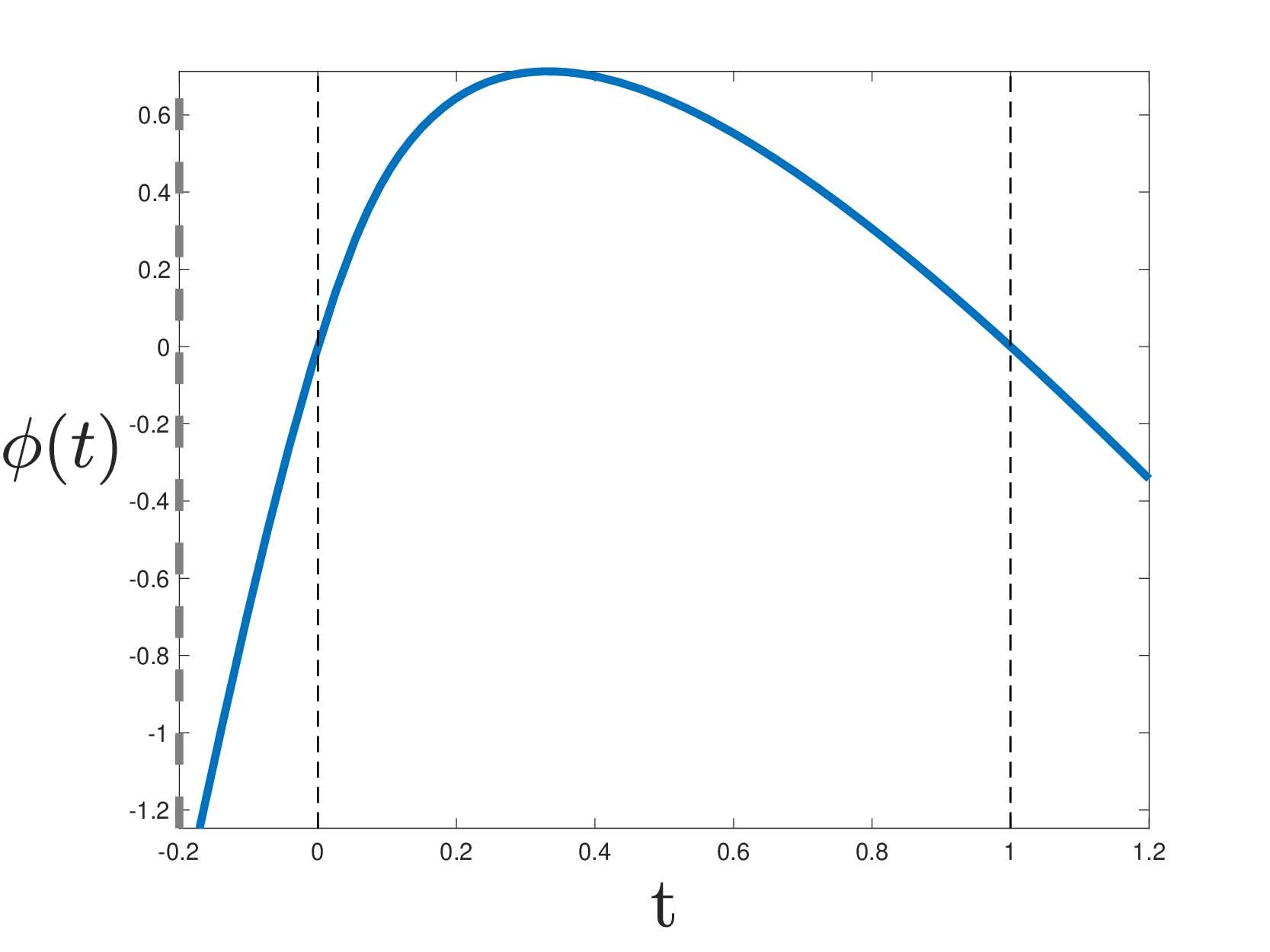}
    \caption{An illustration of the eigenvalue function $\phi(t)$ 
    %with randomly sampled data matrices $A$ and $B$.
    }
    \label{fig:sample_phit}
\end{figure}

%\subsection{Algorithm: Fair PCA via Eigenvalue Optimization}
\subsection{Algorithm}
 \label{sec:alg}

\Cref{alg:eigsumopt} is an outline of the proposed algorithm for the FPCA \eqref{eq:trminmax} via  
the eigenvalue optimization. It is called EigOpt in short.

\begin{algorithm}[H]
%\caption{Fair PCA via Eigenvalue Optimization (FPCAviaEigOpt)} 
\caption{EigOpt}
\label{alg:eigsumopt}
\begin{algorithmic}[1]

\REQUIRE{data $A \in \RR^{m_{1} \times n}$, $B \in \RR^{m_{2} \times n}$;
number of principal components $r < n$;
error tolerance tol.
} 
 
\ENSURE{the solution  $U_* \in \RR^{n \times r}$ of the FPCA \eqref{eq:trminmax}.}

\STATE \label{step:singular_values}
Compute the largest $r$ singular values %$\sigma_{i}(A)$ and $\sigma_{i}(B)$ 
of $A$ and $B$, respectively, 
that define the matrices 
$H_{\!_A}$ and $H_{\!_B}$ in ~\eqref{eq:HAHBdef} {\em implicitly}.

\STATE \label{step:find_t}
Compute $t_{*} \in [0,1]$ that maximizes $\phi(t)$ defined in \eqref{eq:maxphi}, 
using the absolute error tolerance $\text{tol}$.

\STATE \label{step:compute_eigenvectors}
Compute the eigenvectors $U_*$ corresponding to the smallest $r$ eigenvalues of $H(t_{*}) = t_* H_{\! A} + (1-t_*)H_{\! B}$.

\end{algorithmic} 
\end{algorithm}

A few remarks are as follows.
\begin{enumerate}
    \item 
    In step~\ref{step:find_t}~and~\ref{step:compute_eigenvectors},
    we need to compute the smallest $r$ eigenvalues of the symmetric matrix
    $H(t) = tH_{\!_A}+(1-t)H_{\!_B}$ for a given $t\in[0,1]$.
    If the matrix $H(t)$ is of a moderate size, % (as in our numerical examples), 
    then we construct $H(t)$ explicitly, compute all its eigenvalues, 
    and select  the $r$ smallest ones.
    
    % For large-scale problems, it is preferable to compute
    % the smallest eigenvalues of $H(t)$ by 
    % Krylov subspace methods, such as MATLAB's \texttt{eigs}.
    % In this case, we only need to supply the matrix-vector multiplications 
    % $H(t)v$, without explicitly formulating $H(t)$. 

When the size of the matrix $H(t)$ is large,
we can apply a Krylov subspace method to find its smallest $r$ eigenvalues.
In this case, we only need to supply the matrix-vector multiplications 
$u=H(t)\cdot v$  without explicitly formulating $H(t)$. 
We note that 
by the definitions of $H_{\!_A}$ and $H_{\!_B}$ in
\eqref{eq:HAHBdef}, 
\begin{equation} \label{eq:Ht_def2}
        H(t) =  t \cdot H_{\!_A} + (1 - t) \cdot H_{\!_B} 
        = \Big( t\cdot\gamma_{\!_A} + (1-t)\cdot\gamma_{\!_B}\Big) \cdot I_{n} 
        - \left(\frac{t}{m_{1}} \cdot A^{T} A + \frac{1 - t}{m_{2}} \cdot B^{T} B \right),
\end{equation}
where $\gamma_{\!_A}:= \frac{1}{m_1r}\sum_{i=1}^{r} \sigma_i^2(A)$
and $\gamma_{\!_B}:= \frac{1}{m_2r}\sum_{i=1}^{r} \sigma_i^2(B)$.
Consequently, the product $u=H(t)\cdot v$ can be evaluated in four matrix-vector multiplications with the matrices
$A$, $B$, $A^T$, and $B^T$. 
%in addition to three scalar multiplications and two additions for vectors of length-$n$.

    \item 
    For the eigenvalue optimization in step~\ref{step:find_t}, 
%    which involves maximizing a concave function $\phi(t)$ over a bounded interval $[0,1]$,
    we can apply Brent's method~\citep[Sec.5.4]{Brent:2013},
    which combines the golden-section search with parabolic interpolation.
    Brent's method is {\em derivative-free},  {\em globally convergent}, 
    and {\em locally superlinearly convergent}~\citep[Sec.5.4]{Brent:2013}. 
    It is ideal for our eigenvalue optimization task and is available as MATLAB's built-in function~\texttt{fminbd}. Therefore, step~\ref{step:find_t} can be conveniently implemented 
    with just three lines of MATLAB code: 
    
\begin{lstlisting}[style=MATLAB]
H = @(t) t*HA + (1-t)*HB; 
phi = @(t) sum(eigs(H(t), r, 'smallestreal'));
t_star = fminbnd(@(t) -phi(t), 0, 1);
\end{lstlisting}

If necessary, one can specify the tolerance parameters for \texttt{eigs} and \texttt{fminbnd}.

    \item 
    In step~\ref{step:compute_eigenvectors}, we assume
    $\lambda_{r+1}(H(t_*))>\lambda_r(H(t_*))$, i.e., a gap exists between the eigenvalues. Therefore the eigenspace of the $r$ smallest eigenvalues of $H(t_*)$ is unique. 
Any orthogonal basis matrix $U_*\in\OO^{n \times r}$
can be used  to define the $y_*$ by~\eqref{eq:ystar}. 
The choice of the orthogonal basis $U_*$ is irrelevant,
since $\Tr(U^TH_{\!_A}U)$ and $\Tr(U^TH_{\!_B}U)$ 
are invariant under a right multiplication of $U$ by any $Q\in\OO^{n\times r}$.

The multiple eigenvalue for $\lambda_r$ is rare in practice and was not observed in our experiments. 
If we encounter the case  $\lambda_{r}(H(t_*)) = \lambda_{r+1}(H(t_*))$, 
the $r$-th eigenvalue may have multiple linearly independent eigenvectors,
and all of them belong to the eigenspace.
In this case,  we need to select $r$ particular eigenvectors to construct $U_*\in\OO^{n\times r}$.
%Recall from \Cref{thm:eigopt} that if $\lambda_{r}(H(t_{*})) = \lambda_{r+1}(H(t_{*}))$, the eigenspace for the smallest $r$ eigenvalues of $H(t_*)$ is not uniquely defined. 
%While \Cref{thm:eigopt} guarantees the existence of an eigenspace
%correspond to the fair PCA solution, 
%the computed matrix $\widehat U\in \OO^{n\times r }$ 
%%which may correspond to any particular eigenspace, 
%may not satisfy the fairness condition 
%$\Tr(\widehat U^T H_{\!_A} \widehat U ) =  \Tr(\widehat U^T H_{\!_B} \widehat U)$.
However, a specific selection of $r$ eigenvectors,
denoted by  $\widehat U$,
may not satisfy the fairness condition 
$\Tr(\widehat U^T H_{\!_A} \widehat U ) =  \Tr(\widehat U^T H_{\!_B} \widehat U)$.
Consequently, in the case of multiple eigenvalues, we need an additional postprocessing step described as follows. 
%to select the Fair PCA solution from computed ones.
Assume that $H(t_*)$  has eigenvalues ordered as 
\[
\lambda_1\leq \dots \leq \lambda_p < 
\underbrace{\lambda_{p+1} = \dots \lambda_r=\lambda_{r+1}=\dots = \lambda_{p+q}}_{q \text{ times}} < \lambda_{p+1+1},
\]
where the $r$-th eigenvalue has multiplicity $q$.
We partition accordingly the eigenvectors of $H(t_*)$ as 
\begin{equation}\label{eq:u1u2}
[U_1,U_2]\in\OO^{n\times (p+q)}
\quad \text{with $U_1\in \OO^{n\times p}$ and  $U_2\in \OO^{n\times q}$},
\end{equation}
where $U_1$ corresponds to the first $p$ eigenvalues, 
and $U_2$ corresponds to the repeated eigenvalues.
Since the FPCA solution $U_*$ contains eigenvectors corresponding to 
the smallest $r$  eigenvalues of $H(t_*)$, 
%we can select a subset of $r$ eigenvector from~\eqref{eq:u1u2} to construct $U_*$ in the form of 
we construct  $U_*$ as 
\begin{equation}\label{eq:usv}
U_* = [U_1, U_2 V]\in\OO^{n\times r}
\text{~~~~~ for some $V\in\OO^{q\times (r-p)}$}.
\end{equation}
%Clearly, for all orthogonal $V$, we have  $\Tr(U^TH(t_*)U) = \sum_{i=1}^r \lambda_i(H(t_*))$,
%so they achieve the optimal n 
Our goal is to find a particular $V$ such that 
the fairness condition is satisfied:
\begin{equation}\label{eq:rootfind0}
0 =  \Tr(U_*^T H_{\!_A} U_* ) -  \Tr(U_*^T H_{\!_B} U_*)  =  \Tr(U_*^T [H_{\!_A} - H_{\!_B}] U_*)
=  \gamma +  \Tr( V^T C V) ,
\end{equation}
where $\gamma = \Tr(U_1^T (H_{\!_A}- H_{\!_B} )U_1)$ 
is a constant and 
$C = U_2^T (H_{\!_A}- H_{\!_B}) U_2\in \RR^{q\times q}$
is the difference matrix $H_{\!_A}- H_{\!_B}$ projected onto 
the eigenspace of repeated eigenvalues (often of a small size).

%Plugging~\eqref{eq:usv}  into the equation above, we have sucha  $V$ must satisfy 

Note that we only need a particular
$V\in\OO^{q\times (r-p)}$ that satisfes the condition~\eqref{eq:rootfind0}.
This solution can be conveniently found by a line search.
First, the maximum and minimum values of  the function 
\[ g(V):=\gamma+\Tr(V^TCV)\]
are  respectively  achieved by  
the eigenvectors $V_M\in\OO^{q\times (r-p)}$ 
corresponding to the $r-p$ largest eigenvalues of $C$,
and  $V_m\in\OO^{q\times (r-p)}$ 
corresponding to the $r-p$ smallest eigenvalues of $C$.
We have %the inequalities 
\begin{equation}\label{eq:gvm}
g(V_m)\equiv 
\gamma +  \Tr( V_m^T C V_m) \leq  0 \leq  \gamma +  \Tr( V_M^T C V_M)
\equiv g(V_M),
\end{equation}
where \Cref{thm:eigopt} ensures that $0\in[g(V_m),g(V_M)]$.
Since $g(V_m)$ and $g(V_M)$ have opposite signs
(unless one of them is $0$, in which case the solution is trivial),
we can search along a smooth curve $V(t)$ that connects $V_m$ and $V_M$
over the Grassmann manifold to find the solution where $g(V)=0$.
For example, let 
\begin{equation}\label{eq:vt}
V(t) = \mbox{orth}(t\cdot V_M + (1-t)V_m) 
\quad\text{for $t\in[0,1]$}.
\end{equation}
Then $g(V(0))=g(V_m)<0$ and $g(V(1))=g(V_M) >0$, so we can apply bisection 
to  the root-finding problem 
\begin{equation}\label{eq:grt}
    g(V(t))=0 \text{\quad with\quad $t\in[0,1]$}
\end{equation} 
to find the root $\hat t$ such that $g(V(\hat t))=0$.
Here, we assume $V(t)$ in~\eqref{eq:vt} has a full rank $r$ for all $t\in[0,1]$,
otherwise, more sophisticated treatment is required, which is beyond the scope of this work.

Once the root $\hat t$ of~\eqref{eq:grt} is found, we can construct
the FPCA solution as $U_*= [U_1,U_2V(\hat t)]$ using~\eqref{eq:usv}.
It is straightforward to verify that $U_*$ satisfies the fairness condition via~\eqref{eq:rootfind0}
and the optimality via~\eqref{eq:eigopt}.
\item 
The overall complexity of Algorithm~\ref{alg:eigsumopt} is 
\begin{equation}\label{eq:complex}
 O(n^3 (\log_2(\text{tol}))^2 + m n^2),
\end{equation}
where tol is the absolute error tolerance of $t_*$ for
the eigenvalue optimization in step~\ref{step:find_t}.
This complexity consists of the following key components: 
\begin{itemize}
    \item The SVD of $A$ and $B$ 
    requires $O(m_{1} n^2) +O(m_{2} n^2)$ time complexity in step~\ref{step:singular_values}.
    Recall that the SVD of an $m \times n$ matrix has a complexity 
    $O(m n^2)$ \citep[pp.493]{Golub:2013}. 
    In addition, constructing $H_{\!_A}$ and $H_{\!_B}$ by~\eqref{eq:HAHBdef} 
    requires $O(m_{1} n^2)+O(m_{2} n^2)$ operations;

    \item The eigenvalue optimization by Brent's method in step~\ref{step:find_t},
    with an absolute error tolerance tol, costs 
    %$O((\log_2(\frac{\text{tol}}{b-a}))^2)$ 
    $O([\log_2(\text{tol})]^2)$ 
    evaluations of $\phi(t)$ over the interval $[0,1]$;
    see~\cite[Sec.5.4]{Brent:2013}. 
    Each evaluation of $\phi(t)$ requires the solution of a
    symmetric eigenvalue problem of size $n$, 
    with a complexity $O(n^3)$.
    %which has 
    
\end{itemize}

In comparison, the SDR-based algorithm by~\cite{Samadi:2018} 
has a complexity 
$O \left(n^{6.5} \log(1/\mbox{tol}) \right)$,
if the SDP is solved by conventional convex optimization,
and $O \left({n^3}/{\text{tol}^2} \right)$,
if it is solved by the multiplicative weight (MW) update method,
where $\text{tol}$ is the error tolerance of the SDP and LP.
%Clearly, this complexity is typically much higher than that of~\eqref{eq:complex}. 
Those complexities are much higher than that of~\eqref{eq:complex}. 

\end{enumerate}

\section{Experiments} \label{sec:experiment}
In this section, we demonstrate the performance of EigOpt
on real-world datasets and compare the performance with those of the standard PCA and the SDR-based  FPCA.\footnote{
The code for standard PCA and EigOpt is available at \url{https://github.com/JunhuiShen/Fair-PCA-via-EigOpt}. \\
The code for the SDR-based FPCA algorithm is available at \url{https://github.com/samirasamadi/Fair-PCA}.
}

Experiments were conducted on a MacBook Pro
with a 12-core M2 Max processor @3.49GHz,
32GB of RAM, and 48MB of L3 cache.

\subsection{Datasets}

\paragraph{Bank Marketing (BM).}
This is a dataset,
introduced by~\cite{Moro:2014}, to analyze the success of direct marketing campaigns
for promoting term deposit subscriptions at a Portuguese bank.\footnote{\url{https://archive.ics.uci.edu/dataset/222/bank+marketing}} 
The dataset is divided into two age groups: 
$A \in \RR^{810 \times 16}$, representing younger individual, and
$B \in \RR^{44401 \times 16}$, representing older ones. It has since been widely used in fairness research, 
such as clustering~\citep{Bera:2019} and PCA~\citep{Kleindessener:2023, pelegrina:2024}.

\paragraph{Default of Credit Card Clients (DCC).}
This dataset was introduced in~\cite{Yeh:2009} to investigate default payment behavior in Taiwan.\footnote{\url{https://archive.ics.uci.edu/dataset/350/default+of+credit+card+clients}} 
The data is divided by education level into two groups: 
%subgroup 
$A \in \RR^{10599 \times 23}$ represents graduate degree holders 
and 
$B \in \RR^{19401 \times 23}$ represents %individuals with 
other education levels.  It has been used in 
the fair PCA study~\citep{Samadi:2018, Kamani:2022, Pelegrina:2022, pelegrina:2024}. 

\paragraph{Crop Mapping (CM).}
This dataset is from the UCI Machine Learning Repository.\footnote{\url{https://archive.ics.uci.edu/dataset/525/crop+mapping+using+fused+optical+radar+data+set}}. It was collected in Manitoba, Canada for cropland classification \citep{Femmam:2022, Vanishree:2022}.
The dataset is divided by crop type into two groups:
$A \in \RR^{39162 \times 173}$ represents corn and 
$B \in \RR^{286672 \times 173}$  represents other crops. 

\paragraph{Labeled Face in the Wild (LFW).}
This database contains face photographs and is used for studying unconstrained face recognition~\citep{Huang:2008, Schlett:2022, Taigman:2014, Wang:2023}.\footnote{\url{https://vis-www.cs.umass.edu/lfw/}}
It is  often  used in fairness research, 
particularly in studies on the fair PCA~\citep{Pelegrina:2022, Samadi:2018, Tantipongpipat:2019} for gender-based analysis.
The dataset is divided into 
two groups, with $A \in \RR^{2962 \times 1764}$ for females and 
$B \in \RR^{10270 \times 1764}$ for males.

\subsection{Experiment results}

\begin{example}  \label{eg:PCAvsFPCA_error}
{\rm 
In this example, we compare  the FPCA with the standard PCA 
in terms of the {\em reconstruction error} and {\em reconstruction loss} 
of the computed basis matrix $\widehat U_r\in\RR^{n\times r}$ for dimension reduction. 
EigOpt (\Cref{alg:eigsumopt}) is used to compute the 
solution of the FPCA~\eqref{eq:trminmax}.

Figure~\ref{fig:error+loss}(a) reports the overall reconstruction error of the data matrix 
$M$, measured by $\|M- M \widehat{U}_r \widehat{U}_r^{T}\|_F^2$, 
as a function of the reduced dimension $r$. 
As expected, the errors are reduced monotonically with increasing $r$.
We observe that the FPCA  %FPCAviaEigOpt 
exhibits slightly larger reconstruction errors
than the standard PCA, with increases ranging from about $0.01\%$ to $20.44\%$ across datasets.
This reflects the trade-off between accuracy and fairness in the linear dimensionality reduction.

\begin{figure}[htbp]
\begin{center}
\subfigure[Reconstruction error $\|M-M\widehat U_r\widehat U_r^T\|_F^2$]{
\begin{minipage}[t]{0.45\linewidth}
  \centering
  \includegraphics[width=8.5cm]{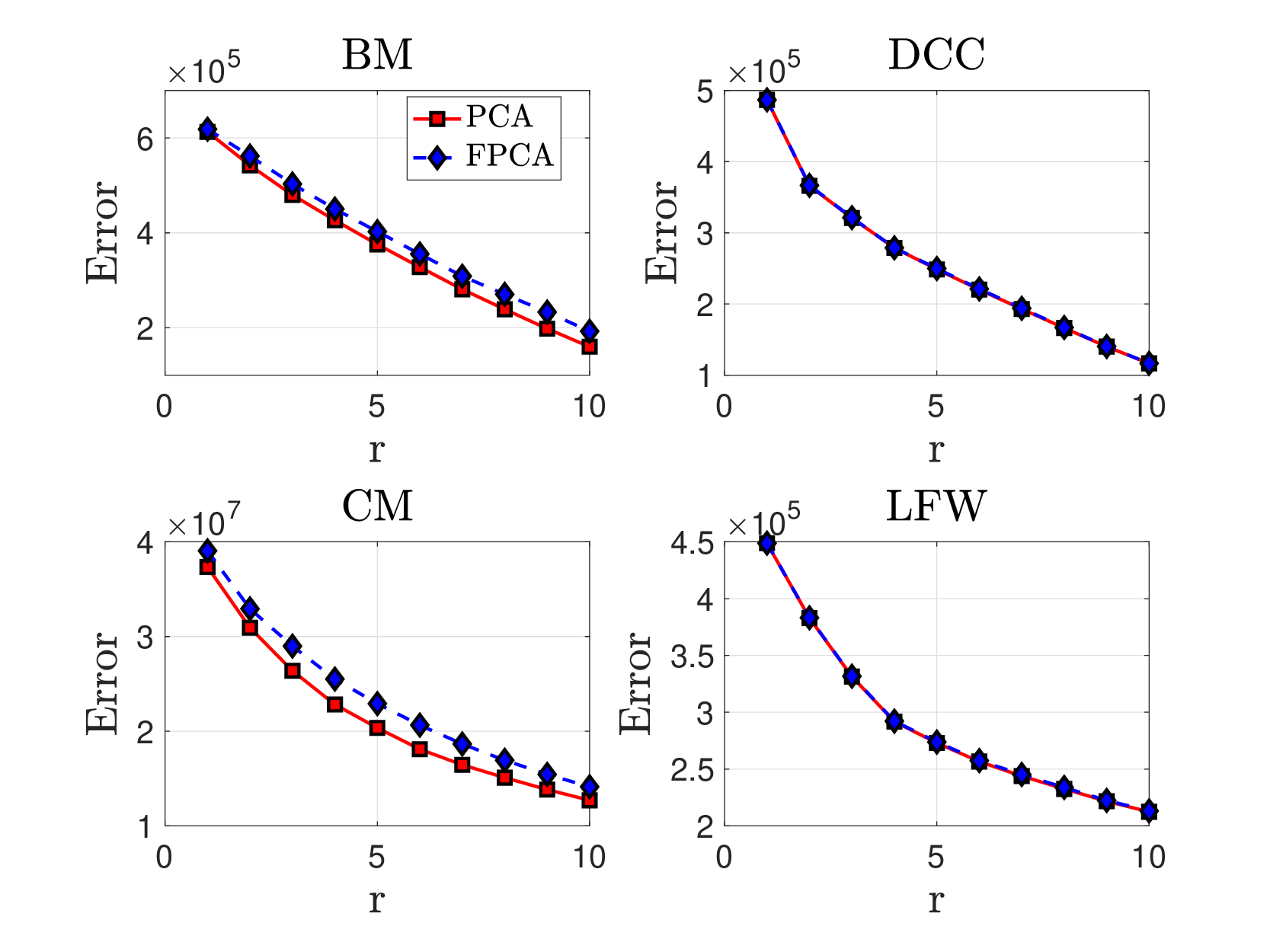}
  \end{minipage}
  }
  \
  \subfigure[Reconstruction loss for groups $A$ and $B$]{
\begin{minipage}[t]{0.45\linewidth}
  \centering
  \includegraphics[width=8.5cm]{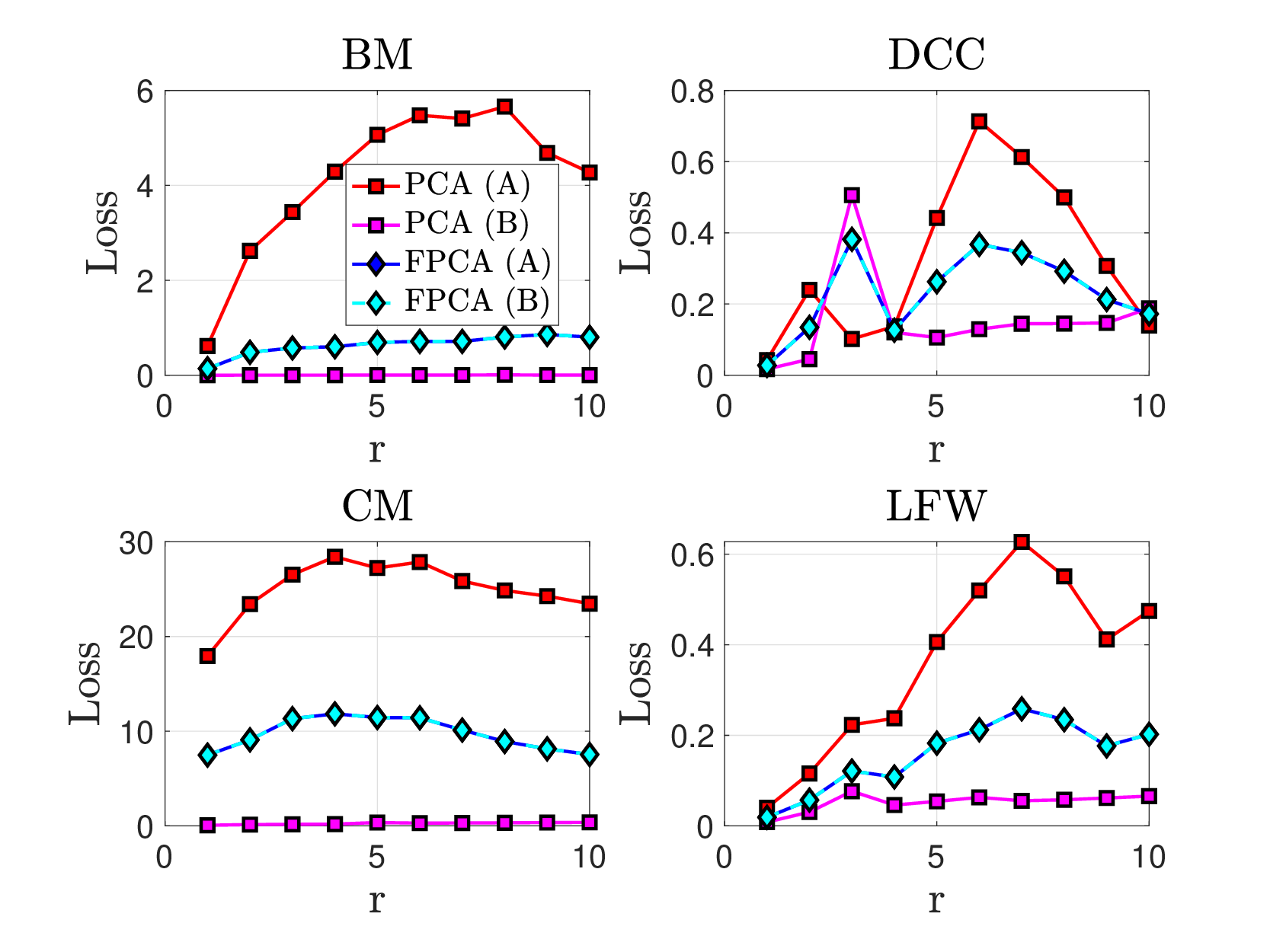}
  \end{minipage}}
\end{center}
\caption{Reconstruction error and loss in Example~\ref{eg:PCAvsFPCA_error}.}
   \label{fig:error+loss}
\end{figure}

Figure~\ref{fig:error+loss}(b) depicts the {\em average reconstruction loss} for the groups $A$ and $B$ measured respectively by 
$\Tr(\widehat{U}_r^{T} H_{\!_A} \widehat{U}_r)$ and $\Tr(\widehat{U}_r^{T} H_{\!_B} \widehat{U}_r)$.
The standard  PCA results in significant disparities, 
with the group $B$ showing much higher loss than the group $A$.
In contrast, the FPCA by~\Cref{alg:eigsumopt}
consistently achieves equity in the loss between the groups.
In all the testing cases, we observed that the FPCA solution
$\widehat U_r$ satisfy
\begin{equation}\label{eq:lossbd}
\left| \frac{\loss_{\!_A}(\widehat U_r)}{\loss_{\!_B}(\widehat U_r)} - 1\right| \leq 10^{-5},
\end{equation}
which confirms the fairness property as established in~\eqref{eq:fairobj}, and indicates high accuracy in the computed solution by~\Cref{alg:eigsumopt}.
} 
\end{example}

\begin{example} \label{eg:eigopt_vs_lp}
\rm 

We now compare the overall running time of the three PCA approaches:
(a) the standard PCA;
(b) the FPCA by EigOpt (\Cref{alg:eigsumopt}); 
and (c) the FPCA by the SRD-LP-based algorithm of ~\cite{Samadi:2018}.
The results are presented in~\Cref{tab:runtime}.
We observe that the running time of the EigOpt 
is very close to the standard PCA,
with a slowdown of only about $4.79 \%$ to $85.81 \%$.
In addition, all computed solutions of~\Cref{alg:eigsumopt} 
achieved fair reconstruction losses as in~\eqref{eq:lossbd}.
This shows that~\Cref{alg:eigsumopt} provides a reliable
way for the FPCA with a cost comparable to the standard PCA!

In contrast, the SDR-based algorithm takes significantly more time. It is about $8\times$ slower than the standard PCA and~\Cref{alg:eigsumopt}.
Moreover, the computed solutions are only sub-optimal. 
The current implementation by~\cite{Samadi:2018} used a
multiplicative weight (MW) update method for 
semidefinite programming, with a given number $T$ of iterations
(a tunable parameter).
For the test cases in~\Cref{tab:runtime}, 
the corresponding error in the loss ratio,
measured by  $\big| \frac{\loss_{\!_A}}{\loss_{\!_B}}-1\big|$,
is shown in~\Cref{fig:ratioerr}.
The SDR-based algorithm exhibits much larger deviations from the fairness criterion compared to the EigOpt, 
despite requiring more computation time.
In addition, we note that the SDR-based algorithm produces
an approximate projection matrix $P=UU^T$ rather than the basis matrix $U$.
In a number of tests, we observed that the computed $\widehat{P}$ has a rank exceeding $r$. 

%\textcolor{red}{Dr. Lu, we can say r+1 here since Samedi's paper admitted to an extra dimension. With computation errors, it can be r+2 or worse-Aaron.....
%To Aaron: Actually, we observed more than r+2 in the test. 
%To avoid confusion, we did not report the specific values of the rank.
%}

\begin{table}[htbp]
\centering
\caption{ Comparison of runtime (in seconds) for 
the standard PCA, FPCA via EigOpt (\Cref{alg:eigsumopt}), FPCA via
SDR-based algorithm by~\cite{Samadi:2018}.
}
\label{tab:runtime}
\begin{tabular}{l|ccc}
\toprule 
\textbf{Dataset ($r$)} & \textbf{PCA} & \textbf{EigOpt} & \textbf{SDR} \\ %& \textbf{RatioErr}\\ 
\midrule
BM (2)    & 0.01  & 0.01  & 0.10 \\% & 1.59E-02\\
BM (4)    & 0.01  & 0.01  & 0.10 \\% & 6.81E-09\\
BM (8)    & 0.01  & 0.01  & 0.11 \\% & 2.56E-02\\
\midrule 
DCC (5)   & 0.01  & 0.01  & 0.10\\% & 1.75E-02 \\
DCC (10)  & 0.01  & 0.01  & 0.10\\% & 1.19E-01\\
DCC (15)  & 0.01  & 0.01  & 0.10\\% & 3.99E-01\\
\midrule 
CM (30)   & 1.19  & 1.27  & 13.73\\% & 2.27E-01 \\
CM (60)   & 1.05  & 1.23  & 12.97\\% & 1.24E-03\\
CM (120)  & 1.14  & 1.26  & 13.12\\% & 3.37E-01\\
\midrule 
LFW (50)  & 4.82  & 6.68  & 53.99\\% & 3.22E-03  \\
LFW (100) & 5.79  & 6.95  & 56.86\\% & 7.65E-03 \\
LFW (200) & 4.68  & 8.71  & 58.20\\% & 2.98E-03\\
\bottomrule
\end{tabular}
\end{table}

\begin{figure}[htbp] 
    \centering
    \includegraphics[width=0.6\linewidth]{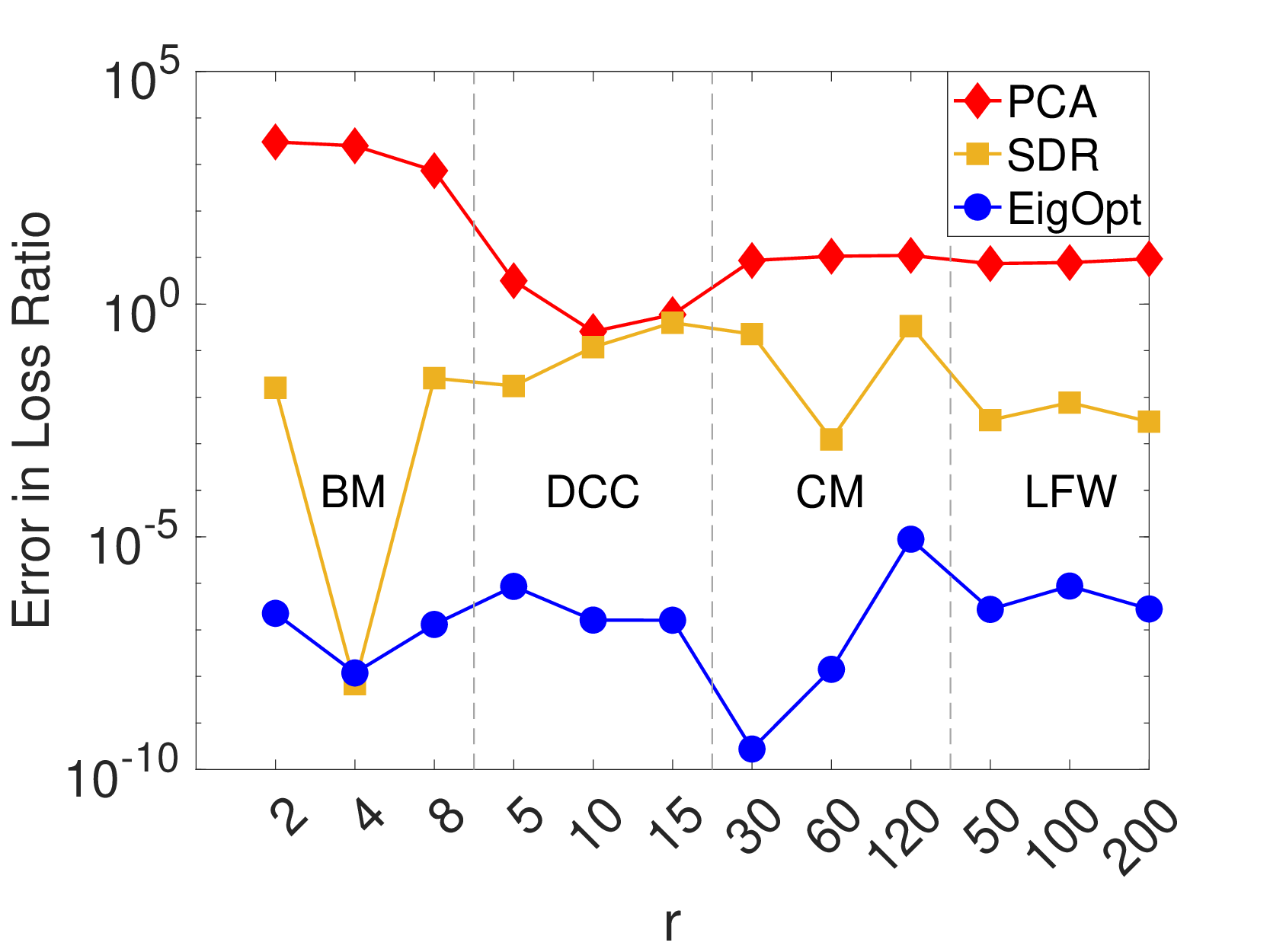}
    \caption{Error in Loss Ratio $\Big| \frac{\loss_{\!_A}}{\loss_{\!_B}}-1\Big|$
    } 
    \label{fig:ratioerr}
\end{figure}
\end{example}

\section{Concluding remarks} \label{sec:conclusion}
We presented a novel eigenvalue optimization approach for solving the FPCA problem \eqref{eq:minfdef} by uncovering a hidden convexity through a reformulation of the problem as
an optimization over the joint numerical range.
Experiments demonstrated that the proposed method is efficient, reliable, and easy to implement and reduced the computational cost of the FPCA to levels comparable to the standard PCA. 

The proposed algorithm can be naturally extended to handle the FPCA
model~\eqref{eq:minfdef} to three subgroups since 
the convexity of the joint numerical range still applies
to  three symmetric matrices, as long as the matrix size
$n > 2$~\citep{Gutkin:2004}. 
However, it remains open to how to generalize the method for four or more groups,
where the convexity of the joint numerical range no longer holds.

\bibliographystyle{apalike}
\bibliography{refs}

\newpage

\end{document}